\begin{document}

\title{Geometric Mixture Models for \\Electrolyte Conductivity Prediction}

\author{%
    \bfseries Anyi Li$^{1\,2\,3}$, Jiacheng Cen$^{1\,2\,3}$, Songyou Li$^{1\,2\,3}$, Mingze Li$^{1\,2\,3}$, Yang Yu$^4$ , Wenbing Huang$^{1\,2\,3}$\thanks{Wenbing Huang is the corresponding author.} \\
    $^1$ Gaoling School of Artificial Intelligence, Renmin University of China \\
    $^2$ Beijing Key Laboratory of Research on Large Models and Intelligent Governance \\
    $^3$ Engineering Research Center of Next-Generation Intelligent Search and Recommendation, MOE \\
    $^4$ Hisun Pharm\\
    \texttt{\{li\_anyi, jiacc.cn, yangyubard\}@outlook.com}; \\
    \texttt{\{songyou\_li, hwenbing\}@126.com};\quad 
    \texttt{limingzetony@gmail.com};
}

\maketitle

\begin{abstract}
Accurate prediction of ionic conductivity in electrolyte systems is crucial for advancing numerous scientific and technological applications. While significant progress has been made, current research faces two fundamental challenges: (1) the lack of high-quality standardized benchmarks, and (2) inadequate modeling of geometric structure and intermolecular interactions in mixture systems. To address these limitations, we first reorganize and enhance the CALiSol and DiffMix electrolyte datasets by incorporating geometric graph representations of molecules. We then propose GeoMix, a novel geometry-aware framework that preserves Set-SE(3) equivariance—an essential but challenging property for mixture systems. At the heart of GeoMix lies the Geometric Interaction Network (GIN), an equivariant module specifically designed for intermolecular geometric message passing. Comprehensive experiments demonstrate that GeoMix consistently outperforms diverse baselines (including MLPs, GNNs, and geometric GNNs) across both datasets, validating the importance of cross-molecular geometric interactions and equivariant message passing for accurate property prediction. This work not only establishes new benchmarks for electrolyte research but also provides a general geometric learning framework that advances modeling of mixture systems in energy materials, pharmaceutical development, and beyond.

\end{abstract}

\section{Introduction} 
\label{sec:intro}

Predicting the properties of multi-molecular mixture systems plays a central role in advancing a wide range of emerging technologies, such as next-generation energy storage, advanced materials design, and pharmaceutical formulation~\citep{muller2025constructing, shen2024supersalt, guan2022differentiable, jirasek2020hybridizing}.
Electrolytes, as a prototypical example of these systems, typically consist of lithium salts and organic solvents of diverse types. Accurate prediction of electrolyte conductivity is a key research focus, as it directly determines critical battery performance metrics including charge transfer efficiency, power density, and operational stability~\citep{plechkova2008applications, ling2022review}. However, modeling such complex systems presents significant challenges, as it requires simultaneously accounting for both the internal structure of individual molecules and the intricate interactions among different molecules. Classical theoretical approaches---ranging from the Nernst equation to dilute solution models---either neglect explicit intermolecular forces or rely on extrapolations from infinitely dilute regimes, resulting in limited accuracy and poor universality for practical applications~\citep{planck1930vorlesungen, hamann2007electrochemistry}.

Spurred by recent advancements in machine learning, particularly deep learning, researchers have increasingly adopted learning-based approaches to model electrolyte systems~\citep{zhang2022predicting, gong2025predictivebamboo,zhu2024differentiable}, achieving significant progress over classical methods. As a data-driven paradigm, the development of predictive models for these systems hinges critically on high-quality and standardized benchmarks. However, the field still faces substantial challenges: most existing datasets are semi-structured, requiring labor-intensive manual curation from primary literature due to inconsistent formatting standards. This lack of accessible, well-curated benchmarks has severely hindered systematic comparisons and reproducibility across methodologies. 

Another critical limitation in existing research stems from inadequate geometric characterization of mixture systems. Previous works such as MM-MoLFormer~\citep{soares2023capturing} and MolSets~\citep{zhang2023molsets} represent molecules in electrolyte systems using SMILES sequences or topological graphs. 
However, these approaches neglect the geometric structure of molecules in 3D space, which significantly impacts the properties of entire systems~\citep{huang2026geometric,han2024survey,zhang2024improving,zhang2025fast}. While recent methods such as DiffMix~\citep{zhu2024differentiable} and Uni-ELF~\citep{zeng2024unielf} have incorporated 3D structural information, they aggregate molecular information into global features and simulate intermolecular interactions through comparing these aggregated representations, thereby overlooking fine-grained and full-atom interactions across molecules. A naive solution would involve direct atom-level coordinate transfer between molecules, but this violates the fundamental symmetries of mixture systems. Crucially, any physically meaningful representation must maintain consistency under: 1) independent permutation of each molecule's atoms; 2) independent SE(3) transformations (rotation/translation) of each molecule's local coordinate system; and 3) permutation of molecules within the system. We formalize these requirements as \emph{Set-SE(3) equivariance} (see \cref{eq:atom-level-permutation,eq:atom-level-rot,eq:graph-level-perm}), a challenging but essential property for developing accurate models of molecular interactions.

To address the above two challenges, we first curate and standardize two public datasets for electrolyte conductivity prediction: CALiSol~\citep{de2024calisol} and DiffMix~\citep{zhu2024differentiable}. Inspired by the powerful expressiveness of geometric information~\citep{cen2025universally,yuan2025survey,li2023distance,li2025on,li2025geometric,yan2025georecon,li2025large,li2025size,wang2023mperformer,wang2024mmpolymer,wang2025polyconf,wang2025wgformer,yue2024plug,yue2025reqflow,liu2025denoisevae,xue2025se,wu2023equivariant,wu2025siamese,li2024powder,lin2025tokenizing,liu2024equivariant,zheng2024relaxing,liu2025equivariant,liu2024segno,yuan2025nonstationary}, we augment the molecular representation by constructing geometric graphs where atoms are characterized by both invariant features (\emph{e.g.}, atomic numbers) and equivariant features (\emph{e.g.}, 3D coordinates), ensuring downstream models can fully exploit 3D structural information. We then propose \textbf{GeoMix}, a geometry-aware framework for mixture system modeling and property prediction. At its core is the Geometric Interaction Network (GIN), a novel equivariant module designed for intermolecular geometric message passing. The fundamental idea involves first creating a local coordinate frame for each molecule, then computing learnable coordinate transformations to align atom pairs from different local frames, and finally performing equivariant message passing over the transformed coordinates. 
Our code and dataset are available at Github\footnote{\url{https://github.com/GLAD-RUC/GeoMix}}.

In summary, this paper contributes to the following aspects:
\begin{itemize}
    \item We curate and standardize two public datasets, CALiSol and DiffMix, covering a wide range of electrolyte mixture systems. The datasets are carefully processed with geometric graph construction, forming a benchmark for electrolyte conductivity prediction.
    \item We propose GeoMix, a novel geometric framework for modeling mixture systems. GeoMix maintains Set-SE(3) equivariance while capturing fine-grained geometric relationships between molecules, overcoming the limitations of existing methods.
    \item We compare GeoMix against diverse baseline models built upon MLPs, GNNs, and geometric GNNs, under consistent settings.  GeoMix consistently outperforms all baselines on both datasets, demonstrating its effectiveness in modeling mixture systems.
\end{itemize}

\section{Related Work} \label{sec:related-work}

\textbf{Early-Stage Methods}\quad
Earliest approaches compute ionic conductivity by applying the Nernst–Einstein equation, which assumes that all ions diffuse independently and intermolecular interactions can be neglected. 
Under this framework, each ionic species contributes additively to the total conductivity.
While effective at low concentrations, this framework breaks down in concentrated or strongly interacting systems, where correlated ionic motion becomes significant~\citep{hamann2007electrochemistry}.
With advances in computational chemistry, first-principles methods such as Density Functional Theory (DFT)~\citep{duignan2021toward} and Machine Learning Force Fields (MLFFs)~\citep{magduau2023machine, gong2025predictivebamboo} became available. These techniques accurately capture interactions between ions and molecules over short timescales, enabling more precise estimates of ion mobilities under realistic intermolecular forces~\citep{gong2025predictivebamboo, yao2022applying, borodin2013molecular, vicent2016quantum}.
Despite their accuracy, classical and ab initio MD simulations require modeling every atom in the local system, leading to substantial computational expense. Moreover, because conductivity predictions rely on multiple intermediate steps, accumulated numerical errors can offset the benefits of high precision.

\textbf{Deep Learning Methods}\quad
Recent deep learning frameworks~\citep{zhang2023molsets, soares2023capturing, priyadarsini24improving, li2024coeff, zhu2024differentiable, zeng2024unielf} aim to model mixture systems directly from their components and proportions.
These methods can be roughly grouped into three categories: 
(1) \textit{SMILES-based methods}~\citep{soares2023capturing, priyadarsini24improving, li2024coeff} compute mixture embeddings by weighted summation over molecular embeddings derived from molecular text representations such as SMILES. However, these approaches ignore the influence of molecular geometry on both the representation and interactions within the mixture.
(2) \textit{Topology-based methods} such as MolSets~\citep{zhang2023molsets} model intermolecular interactions using self-attention over molecular-level graph embeddings. While molecular topology is captured, spatial interactions are only implicitly encoded through attention mechanisms.
(3) \textit{Geometry-based methods} further leverage 3D molecular information. DiffMix~\citep{zhu2024differentiable} aggregates molecular embeddings using learned polynomial mixing coefficients, while incorporating some physical priors. Uni-ELF~\citep{zeng2024unielf} combines microscopic descriptors such as radial distribution functions (RDFs) with learned geometric features to extract invariant intermolecular representations. However, it does not capture equivariant features between molecules.
While most studies about equivariant models focus on molecular interactions within a single reference frame~\citep{cen2024high,aykent2025gotennet}, our proposed method explicitly models node-level geometric relationships across molecules through equivariant message passing. This approach enables fine-grained, geometry-aware interaction modeling at the atomic scale.

\section{Method} \label{sec:method}

In this section, we first formulate the regression task for mixture systems and discuss its associated symmetry constraints in \cref{ssec:preliminaries}. We then introduce GeoMix, a novel geometric learning framework designed for both scalar (invariant) and vector (equivariant) property prediction, in \cref{ssec:model-arch}. Finally, we show a crucial component of GeoMix, GIN, in \cref{ssec:gin}. All proofs are provided in \cref{sec:theorical-details}.

\subsection{Preliminaries}\label{ssec:preliminaries}

\textbf{Notations}\quad
We employ three-level representations to describe mixture systems in a hierarchical way. 1) \textbf{System-level}: A mixture system is a set of $M$ molecules, \emph{i.e.}, $\sS =\{\gG_m\}_{m=1}^M$, where the geometric graph $\gG_m$ represents the $m$-th molecule in the system. The global environment vector $\vc$ encodes physical conditions (\emph{e.g.} temperature and pressure) and $\kappa$ denotes the target electrolyte conductivity.
2) \textbf{Graph-level}: 
Each geometric graph $\gG_m(\mH_m, \gmX_m, \gmV_m, w_m)$ contains $N_m$ nodes. The quantities $\mH_m \in \sR^{H \times N_m}, \gmX_m\in \sR^{3 \times N_m}$ denote node scalar features and node 3D coordinates, respectively; $\gmV_m\in \sR^{3 \times N_m}$ refers to node geometric features, which are usually initialized as zeros or just node coordinates $\gmX_m$; $0< w_m < 1$ is the proportion of molecule $\gG_m$ in the mixture, and the sum of all $w_m$s is equal to 1. All coordinates $\gmX_m$ are centered at the origin beforehand, implying that the model should be translation-invariant. 
3) \textbf{Node-level}: 
We denote by $\vh_i^{(m)} \in \sR^{H}, \gvx_i^{(m)}, \gvv_i^{(m)}\in \sR^{3}$ the representations of the $i$-th node in $\gG_m$. 

\textbf{Task Definition}\quad
We aim to predict both scalar and geometric properties of the mixture system $\sS$.  Formally, our goal is to learn a mapping $\varphi$ from system space to property space:
\begin{equation}\label{eq:task}
    \varphi \colon ( \{\gG_m(\mH_m, \gmX_m, \gmV_m, w_{m})\}_{m =1}^M, \vc )\mapsto (\{\mH'_m, \gmV'_m\}_{m =1}^M, \kappa), 
\end{equation}
where $\mH'_m$ and $\gmV'_m$ respectively represent the predicted scalar and geometric properties of all nodes. These node-level properties can be easily aggregated via readout functions to derive graph-level or system-level property $\kappa$, such as the electrolyte conductivity.

\textbf{Symmetries}\quad
Symmetries are fundamental in the physical world and should be satisfied when modeling interactions between physical systems. In mixture systems, symmetries are more complicated than those in single-instance systems. To be specific, the symmetries of the function $\varphi$ in \cref{eq:task} are divided into two parts: the \textit{node-level} and the \textit{graph-level} symmetry constraints. 

For node-level symmetries, the output should be permutation-equivariant within each graph, namely,
\begin{equation}\label{eq:atom-level-permutation}
    \varphi \colon ( \{\gG_m(\mH_m\mP_m, \gmX_m\mP_m, \gmV_m\mP_m, w_{m})\}_{m =1}^M, \vc )\mapsto (\{\mH'_m\mP_m, \gmV'_m\mP_m\}_{m =1}^M, \kappa),
\end{equation}
where $\mP_m\in\R^{N_m\times N_m}$ is a permutation matrix acting all nodes in graph $\gG_m$. Besides, the output should also be rotation-equivariant on each graph, \emph{i.e.},
\begin{equation}\label{eq:atom-level-rot}
    \varphi \colon ( \{\gG_m(\mH_m, \mR_m\gmX_m, \mR_m\gmV_m, w_{m})\}_{m =1}^M, \vc )\mapsto (\{\mH'_m, \mR_m\gmV'_m\}_{m =1}^M, \kappa), 
\end{equation}
where $\mR_m\in\R^{3\times 3}$ is a rotation matrix acting on node coordinates and geometric features in $\gG_m$. 
Translation equivariance is naturally achieved by centering each graph, and is therefore omitted.

For graph-level symmetries, the output should be equivariant with respect to the order of the graphs: 
\begin{equation}\label{eq:graph-level-perm}
    \varphi \colon ( \{\gG_{\sigma(m)}\}_{m =1}^M, \vc )\mapsto (\{\mH'_{\sigma(m)}, \gmV'_{\sigma(m)}\}_{m =1}^M, \kappa), 
\end{equation}
where $\sigma$ denotes any permutation acting on all graphs $\{\gG_m\}_{m =1}^M$. 

Note that the output property $\kappa$ is invariant to any transformation. We call the two-level symmetries as \textbf{Set-SE(3) Equivariance} in this paper.

\subsection{Model Architecture}\label{ssec:model-arch}

\begin{figure}[t]
    \centering
    \input{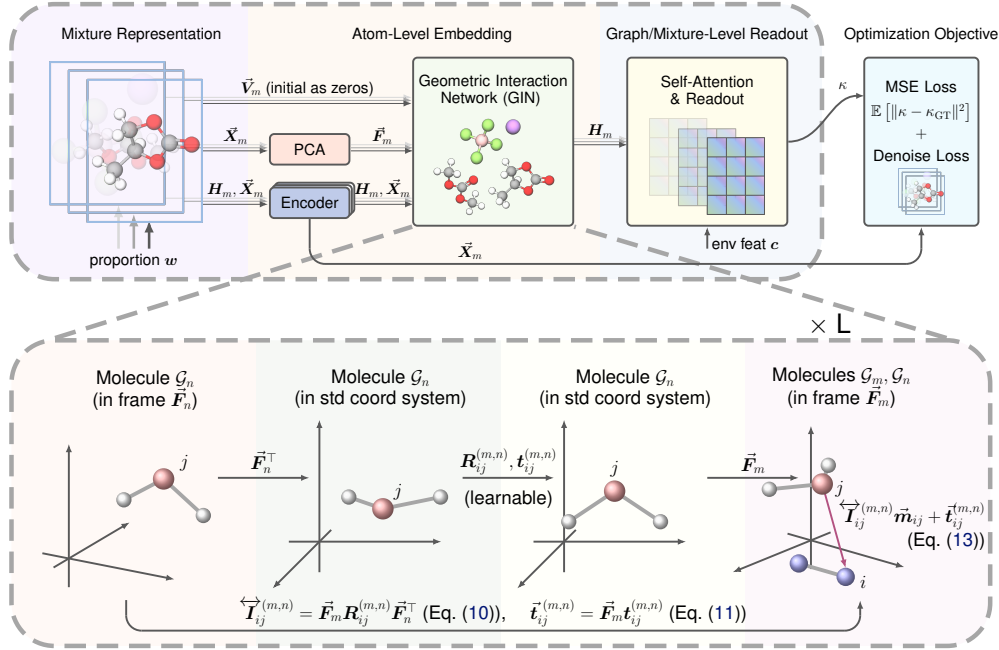}
    \caption{Overview of our GeoMix. 
    By taking a set of molecules $\{\gG_m\}_{m=1}^M$ with proportion $\vw=[w_{m}]_M$ as input, it first constructs local frames $\gmF_m$ via PCA, then applies an equivariant encoder to update $\mH_m, \gmX_m$.
    Intermolecular message passing is performed via GIN, which learns transformations $\overleftrightarrow{\mI}_{\!\!\! ij}^{(m,n)}$ and $\gvt_{ij}^{\ (m,n)}$, which enable equivariant message passing across molecules. 
    The scalar features $\mH_m$ are finally aggregated along with the environment descriptor $\vc$ for prediction.}
    \label{fig:pipeline}
\end{figure}

As illustrated in \cref{fig:pipeline}, we design a general equivariant architecture that enables the update of both invariant and equivariant features through intramolecular and intermolecular message passing.  

\textbf{Proportion Embedding}\quad
Each molecule in the mixture system is associated with a concentration value $w_m$. 
Prior studies~\citep{priyadarsini24improving, zhang2023molsets} often treat each molecule as having a fixed representation regardless of its concentration. 
In chemical thermodynamics, however, the activity of each molecule depends nonlinearly on its mole fraction and the activity coefficient, reflecting coupling between concentration and intermolecular forces~\citep{hamann2007electrochemistry}.
To capture such context dependence, we concatenate each normalized proportion $w_i$ directly into atomic‐level features via feature concatenation.

\textbf{Intramolecular Encoding}\quad
We employ either EGNN~\citep{satorras2021en} or TFN~\citep{thomas2018tensor} as equivariant encoders to characterize molecular geometries through message passing within individual molecules. The same encoder with shared parameters is used for all solvent molecules, while a separate encoder is employed for salts due to their distinct structural properties (organic salts versus inorganic solvents). In this stage, we directly update node coordinates $\gmX_m$ by treating them as geometric features $\gmV_m$.

\textbf{Frame Construction}\quad
The rotation equivariance described in \cref{eq:atom-level-rot} arises primarily because each molecule is represented in an independent local coordinate system. This inherent symmetry prevents direct transfer of coordinate information between different molecules. However, if we are able to derive the local frame of each molecule, we can conduct geometric message passing between different molecules by first transforming coordinate information into standard coordinate space, as depicted in \cref{fig:pipeline}. Here, we achieve this purpose based on Principal Component Analysis (PCA)\footnote{
Since frame construction via traditional PCA is not strictly equivariant owing to the non-unique axis-orientation~\citep{puny2022frame}, we use an improved, equivariance-preserving PCA variant (see \cref{ssec:eval-frame-construct}).
}. 
In particular, we first compute the covariance $\Cov(\gmX_m) = \frac{1}{N_m} \gmX_m \gmX_m^\top$, and then perform eigendecomposition $\Cov(\gmX_m) = \gmF_m \Lambda_m \gmF_m^\top$ to obtain its orthogonal eigenvectors $\gmF_m$ to form a local frame. If $\det(\gmF_m) = -1$, we flip the third axis to ensure a right-handed coordinate system.

\textbf{Intermolecular Interaction}\quad
With the pre-constructed local frames $\{\gmF_m\}_{m=1}^M$, we design a novel component GIN to simulate geometric interactions between different molecules. The details of GIN will be presented in \cref{ssec:gin}. In this module, the coordinates $\gmX_m$ are fixed, the geometric features $\gmV_m$ are initialized as zeros and will be updated throughout GIN.

\textbf{Readout and Prediction Head}\quad
The outputs of GIN are node-level features. For mixture property prediction tasks such as electrolyte conductivity prediction, we first average the node-level features and read them out as graph-level features, which are then modeled by a self-attention encoder without position encoding. The system-level feature is obtained by averaging these graph-level features, and finally, we concatenate with the global environment feature and pass the result through an MLP with an activation function to generate the prediction.
These operations can be formulated as:
\begin{equation}
    \textstyle
    \vh_m = \frac{1}{N_m} \bm{1}^\top \mH_m, \quad
    y = \texttt{MLP}\left(\frac{1}{M} \bm{1}^\top \texttt{SelfAtt}\left( \bigoplus_{m=1}^M \vh_m\right)\oplus \vc \right), 
\end{equation}
where $\oplus$ represents the tensor concatenate operation, $y \in \sR$ is the predicted conductivity. 

\textbf{Optimization Objective}\quad
For invariant prediction tasks, we use Mean Squared Error (MSE) as the loss function. To take full advantage of the equivariant network, we also adopt the Noisy 
Nodes method~\citep{godwin2022simplenoisy}. Noisy Nodes method encourages the model to learn the configurations with lower energy, effectively regularizing the learning process. 
Specifically, our model additionally learns a denoising process of the encoder: 
\begin{equation}
    \{\gG(\mH_m, \gmX_m^{\text{noisy}})\}_{m=1}^M \mapsto 
    \{\gG(\mH_m, \gmX_m^{\text{denoise}})\}_{m=1}^M,
\end{equation}
where $\gmX_m^{\text{noisy}} = \gmX_m + \gN(\bm{0}, \sigma^2\mI)$, and $\sigma=0.3$ is taken for general molecular-level tasks. The auxiliary loss is calculated as:
\begin{equation}
    \textstyle\Ls_{\text{denoise}} = \sum_{m=1}^M \| \gmX_m^{\text{denoise}} - \gmX_m\|_{\text{F}}, 
\end{equation}
where $\|\cdot\|_{\text{F}}$ is the Frobenius norm. The optimization objective is the sum of two losses given by:
\begin{equation}\label{eq:loss}
    \Ls = \Ls_{\text{MSE}} + \gamma \cdot \Ls_{\text{denoise}}.
\end{equation}
The implementation details can be found in \cref{sec:experiment_details}.

\subsection{Geometric Interaction Network}\label{ssec:gin}

In this section, we introduce GIN, a novel equivariant message-passing module to facilitate the geometric message passing between molecules using coordinate transformations, while promisingly ensuring Set-$\mathrm{SO}(3)$ equivariance. 

\textbf{Architecture Overview}\quad 
GIN takes as input a set of geometric graphs $\{\gG_m(\mH_m, \gmX_m, \gmV_m)\}_{m=1}^M$ along with the corresponding local frames $\{\gmF_m\}_{m=1}^M$. In each layer, the node features $\{\mH_m\}_{m=1}^M$ and the geometry features $\{\gmV_m\}_{m=1}^M$ are updated, while the coordinates $\{\gmX_m\}_{m=1}^M$ are unchanged. To model the interaction from the source molecule $\gG_m$ and the target one $\gG_n$, GIN consists of three key components. 1) \textbf{Intermolecular Transformation}: Establishing a transformation within a standard coordinate system based on the features of both the source and target molecules. 2) \textbf{Message Construction}: Computing the message in the source molecule's coordinate frame using the learned transformation, then converting it to the target molecule's frame. 3) \textbf{Aggregation and Update}: Aggregating the messages to update the node-level embeddings in the target molecule. For brevity, in our following formulation, $i\in\gG_m$ consistently denotes the atom receiving message, while $j\in\gG_n$ represents the atom sending message.

\textbf{Intermolecular Transformation}\quad
Since the two molecules exist in different reference frames, the message passing strategy used in traditional equivariant GNNs is no longer suitable. Consequently, we aim to design a transformation that facilitates equivariant message passing across distinct reference frames. For atoms $i\in\gG_m$ and $j\in\gG_n$, we first calculate a scalar $\vz_{ij}^{(m,n)}$ to combine the invariant features in both atoms as follows: 
\begin{equation}\label{eq:inv-interaction}
    \vz_{ij}^{(m,n)} = \sigma_{\texttt{inv}}(\vh_i^{(m)}, \vh_j^{(n)}, \|\gvx_i^{(m)}\|, \|\gvx_j^{(n)}\|, \|\gvv_{i}^{(m)}\|, \|\gvv_j^{(n)}\|),
\end{equation}
where $\sigma_{\texttt{inv}}$ is an MLP, and subsequent $\sigma$ functions with different subscripts represent distinct MLPs. Based on $\vz_{ij}^{(m,n)}$ and the frames $\gmF_m, \gmF_n$, we introduce a learnable matrix $\overleftrightarrow{\mI}_{\!\!\! ij}^{(m,n)}\in\R^{3\times3}$ to "rotate" the equivariant message from $j\in\gG_n$ into the reference frame of $i\in\gG_m$:
\begin{equation}\label{eq:rot_I}
    \overleftrightarrow{\mI}_{\!\!\! ij}^{(m,n)} \coloneqq \gmF_m \mR_{ij}^{(m,n)} \gmF_n^\top, \quad \mR_{ij}^{(m,n)} = \sigma_{\texttt{rot}} (\vz_{ij}^{(m, n)})\in \sR^{3\times 3}. 
\end{equation}
Similarly, we also design a learnable vector $\gvt_{ij}^{\ (m,n)}$ to model the "translation":
\begin{equation}\label{eq:tl_t}
    \gvt_{ij}^{\ (m,n)} \coloneqq \gmF_m \vt_{ij}^{(m,n)}, \quad \vt_{ij}^{(m,n)} = \sigma_{t}(\vz_{ij}^{(m, n)}) \in \sR^{3}.
\end{equation}

It is worth noting that the matrix corresponding to $\overleftrightarrow{\mI}_{\!\!\! ij}^{(m,n)}$ does not represent a strict rotation, as the determinant of $\mR_{ij}^{(m,n)}$ is not constrained to be 1. We also explore alternative constructions of $\mR_{ij}^{(m,n)}$ with a unit determinant, by using quaternion~\citep{hazewinkel2006algebras} and 6D vector~\citep{zhou2019continuity}. However, ablation studies (see \cref{ssec:ablation-study}) reveal that these alternatives yield less favorable results. In fact, we can give the following theorem to explain the expressive power of \cref{eq:rot_I}.
\begin{restatable}[Expressivity of Intermolecular Transformation Matrix]{theorem}{outerProduct}
Given geometric graphs $\gG_m$ and $\gG_n$, with $\gmF_m,\gmF_n$ being respective frames, any matrix $\overleftrightarrow{\mI}_{\!\!\! mn}$ satisfying $\mathrm{SO}(3)$-equivariance $\overleftrightarrow{\mI}_{\!\!\! mn}\xmapsto{\mR_m,\mR_n\in\mathrm{SO}(3)}\mR_m\overleftrightarrow{\mI}_{\!\!\! mn}\mR_n^\top$ must take the form like \cref{eq:rot_I} as $\overleftrightarrow{\mI}_{\!\!\! mn}(\gmF_m, \gmF_n)=\gmF_m \mR \gmF_n^\top$.
\end{restatable}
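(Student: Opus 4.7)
The plan is to exploit the fact that the frames $\gmF_m,\gmF_n$ are themselves $\mathrm{SO}(3)$-equivariant and orthogonal, so that conjugation by them converts the equivariant object $\overleftrightarrow{\mI}_{\!\!\! mn}$ into an invariant one. Concretely, I would first record the transformation laws of the ingredients: under independent rotations $\mR_m, \mR_n\in\mathrm{SO}(3)$ applied to molecules $\gG_m$ and $\gG_n$, the PCA frames transform as $\gmF_m\mapsto \mR_m\gmF_m$ and $\gmF_n\mapsto \mR_n\gmF_n$ (this is immediate from $\Cov(\mR_m\gmX_m)=\mR_m\Cov(\gmX_m)\mR_m^\top$ together with the equivariance-preserving PCA variant cited in \cref{ssec:eval-frame-construct}), while by hypothesis $\overleftrightarrow{\mI}_{\!\!\! mn}\mapsto \mR_m\overleftrightarrow{\mI}_{\!\!\! mn}\mR_n^\top$.

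Next, define the candidate invariant matrix
\begin{equation*}
    \mR \;\coloneqq\; \gmF_m^\top\,\overleftrightarrow{\mI}_{\!\!\! mn}\,\gmF_n \;\in\;\R^{3\times 3}.
\end{equation*}
Applying the transformation laws above gives
\begin{equation*}
    \mR \;\mapsto\; (\mR_m\gmF_m)^\top(\mR_m\overleftrightarrow{\mI}_{\!\!\! mn}\mR_n^\top)(\mR_n\gmF_n) \;=\; \gmF_m^\top\overleftrightarrow{\mI}_{\!\!\! mn}\gmF_n \;=\;\mR,
\end{equation*}
so $\mR$ is a genuine $\mathrm{SO}(3)$-invariant quantity. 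In particular it may be written as a function of invariant features of $\gG_m,\gG_n$ only, which is exactly the role played by $\sigma_{\texttt{rot}}(\vz_{ij}^{(m,n)})$ in \cref{eq:rot_I}.

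Finally, since $\gmF_m$ and $\gmF_n$ are orthogonal (the PCA eigenvector matrices are in $\mathrm{O}(3)$, and the sign-flip step of \cref{ssec:model-arch} promotes them to $\mathrm{SO}(3)$), we have $\gmF_m\gmF_m^\top=\gmF_n\gmF_n^\top=\mI$, so multiplying $\mR$ on the left by $\gmF_m$ and on the right by $\gmF_n^\top$ recovers
\begin{equation*}
    \gmF_m\,\mR\,\gmF_n^\top \;=\; \gmF_m\gmF_m^\top\,\overleftrightarrow{\mI}_{\!\!\! mn}\,\gmF_n\gmF_n^\top \;=\; \overleftrightarrow{\mI}_{\!\!\! mn},
\end{equation*}
which is precisely the form of \cref{eq:rot_I}. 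The main conceptual obstacle is not the algebra, which is essentially one line, but rather verifying the premise that $\gmF_m,\gmF_n$ transform equivariantly and are orthogonal: this is why the equivariance-preserving PCA variant (fixing the sign/orientation ambiguity and enforcing $\det\gmF=+1$) is essential; without it, the mapping $\overleftrightarrow{\mI}_{\!\!\! mn}\mapsto\mR_m\overleftrightarrow{\mI}_{\!\!\! mn}\mR_n^\top$ would not line up with the transformation of $\gmF_m\mR\gmF_n^\top$, and the factorization would fail. Once that premise is granted, the argument shows that \cref{eq:rot_I} is universal among $\mathrm{SO}(3)$-equivariant intermolecular transformation matrices, justifying our design choice of not constraining $\mR_{ij}^{(m,n)}$ to have unit determinant.
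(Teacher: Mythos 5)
Your proof is correct, but it takes a genuinely different route from the paper's. The paper vectorizes the problem: it identifies $(\gmF_m,\gmF_n)$ with the Kronecker product $\gmF_n\otimes\gmF_m$, observes that this $9\times 9$ matrix is full-rank and orthogonal, and then invokes the general characterization of $\mathrm{O}(n)$-equivariant functions (that $\hat f(\Vec{\mZ})$ must lie in the column span of $\Vec{\mZ}$ with coefficients depending only on $\Vec{\mZ}^\top\Vec{\mZ}$) to conclude $\VEC(\overleftrightarrow{\mI}_{\!\!\! mn})=(\gmF_n\otimes\gmF_m)\VEC(\mR)$ with $\mR$ free. You instead argue by direct conjugation: you define the candidate core $\mR\coloneqq\gmF_m^\top\overleftrightarrow{\mI}_{\!\!\! mn}\gmF_n$ explicitly, verify in one line that it is invariant under the simultaneous action $\gmF_m\mapsto\mR_m\gmF_m$, $\gmF_n\mapsto\mR_n\gmF_n$, $\overleftrightarrow{\mI}_{\!\!\! mn}\mapsto\mR_m\overleftrightarrow{\mI}_{\!\!\! mn}\mR_n^\top$, and recover the factorization from orthogonality of the frames. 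Your version is shorter and self-contained (no appeal to the scalars-based universality lemmas of Villar et al.\ and Huang et al.), and it makes transparent that the invariant core may be an arbitrary invariant function of the pair, which matches the actual parameterization $\mR_{ij}^{(m,n)}=\sigma_{\texttt{rot}}(\vz_{ij}^{(m,n)})$ better than the paper's conclusion that the coefficient is a bare learnable constant. What the paper's route buys is the explicit "spanning" picture, i.e., that no component of $\overleftrightarrow{\mI}_{\!\!\! mn}$ can live outside the subspace generated by the frames, and a connection to the standard equivariant-function machinery; your route obtains the same exclusion for free because the frames are invertible. Both arguments hinge on the identical premise, which you correctly flag: the frames must be exactly $\mathrm{SO}(3)$-equivariant and orthogonal with $\det\gmF=+1$, which is precisely why the sign-fixed PCA variant matters and why the final statement is $\mathrm{SO}(3)$- rather than $\mathrm{O}(3)$-equivariance.
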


\textbf{Message Construction}\quad
The invariant message $\vm_{ij}^{(m, n)}$ with geometric interaction is similar to the construction of $\vz_{ij}^{(m,n)}$ in \cref{eq:inv-interaction}. The difference is that in order to align the invariant message and the equivariant message, we additionally introduce the modulus of the translation vector $\gvt_{ij}^{\ (m,n)}$ in \cref{eq:tl_t}. The entire formula is given by:
\begin{equation}\label{eq:inv-message}
    \vm_{ij}^{(m, n)} = \sigma_{\texttt{msg}}(\vh_i^{(m)}, \vh_j^{(n)}, \|\gvx_i^{(m)}\|, \|\gvx_j^{(n)}\|, \|\gvv_{i}^{(m)}\|, \|\gvv_j^{(n)}\|, \|\gvt_{ij}^{\ (m, n)}\|).
\end{equation}
We now compute cross-reference frame equivariant messages by two steps. First, the atom coordinate $\gvx_j^{(n)}$ and the geometric feature $\gvv_j^{(n)}$ are combined to construct the message $\vm_{ij}^{(m,n)}$ in the reference frame of $\gG_n$. Second, this message is transformed into the reference frame of $\gG_m$ by applying the learnable transformation, \emph{i.e.} the "rotation matrix" $\overleftrightarrow{\mI}_{\!\!\! ij}^{(m,n)}$ and the "translation vector" $\gvt_{ij}^{\ (m,n)}$, resulting in the updated message $\vm_{ij}^{(m,n)}$ in the $\gG_m$ frame:
\begin{equation}\label{eq:message}
    \gvm_{ij}^{(m)} = 
    \overleftrightarrow{\mI}_{\!\!\! ij}^{(m, n)} \gvm_{ij}^{(m, n)} + 
    \gvt_{ij}^{\ (m, n)}, \quad \gvm_{ij}^{(m, n)} = 
        \sigma_{\gvx}(\vm_{ij}^{(m, n)}) \gvx_j^{(n)} + 
        \sigma_{\gvv}(\vm_{ij}^{(m, n)}) \gvv_j^{(n)}.
\end{equation}

\textbf{Aggregation and Update}\quad
After devising the message, we proceed to aggregate them and update the atom features. Each atom is required to receive message from all atoms in every other molecule. To this end, we employ the average operator to aggregate both invariant and equivariant message:
\begin{equation}\label{eq:agg}
    \vm_i^{(m)} = \frac{1}{N-N_m} \sum_{n\neq m}\sum_{j=1}^{N_n}\vm_{ij}^{(m, n)}, \quad 
    \gvm_i^{(m)} = \frac{1}{N-N_m} \sum_{n\neq m}\sum_{j=1}^{N_n}\gvm_{ij}^{(m)},
\end{equation}
where $N=\sum_{m=1}^M N_m$ denotes the number of all atoms in the system. Finally, we update the invariant feature $\vh_i^{(m)}$ and equivariant feature $\gvv_i^{(m)}$ as follows:
\begin{equation}\label{eq:update}
    \vh_i^{(m)} = \sigma_{\vh}(\vh_i^{(m)}, \vm_i^{(m)}), \quad
    \gvv_i^{(m)} = \gvv_i^{(m)} + \gvm_i^{(m)}. 
\end{equation}

\section{Experiments}\label{sec:exp}

In this section, we first introduce the two large-scale electrolyte datasets, CALiSol and DiffMix, covering their composition, property distributions, and geometric graph construction in \cref{ssec:datasets}, and provide corresponding visualizations in \cref{fig:dataset}. We then describe our experimental setup, including baseline models, GeoMix implementation, and evaluation metrics in \cref{ssec:exp-setup}. The main results in \cref{ssec:main-results} highlight GeoMix’s superior performance, and ablation studies in \cref{ssec:ablation-study} assess the impact of key components. Further details are provided in \cref{sec:experiment_details}.

\subsection{Datasets}\label{ssec:datasets}

\begin{figure}
    \centering
    \includegraphics[width=\linewidth]{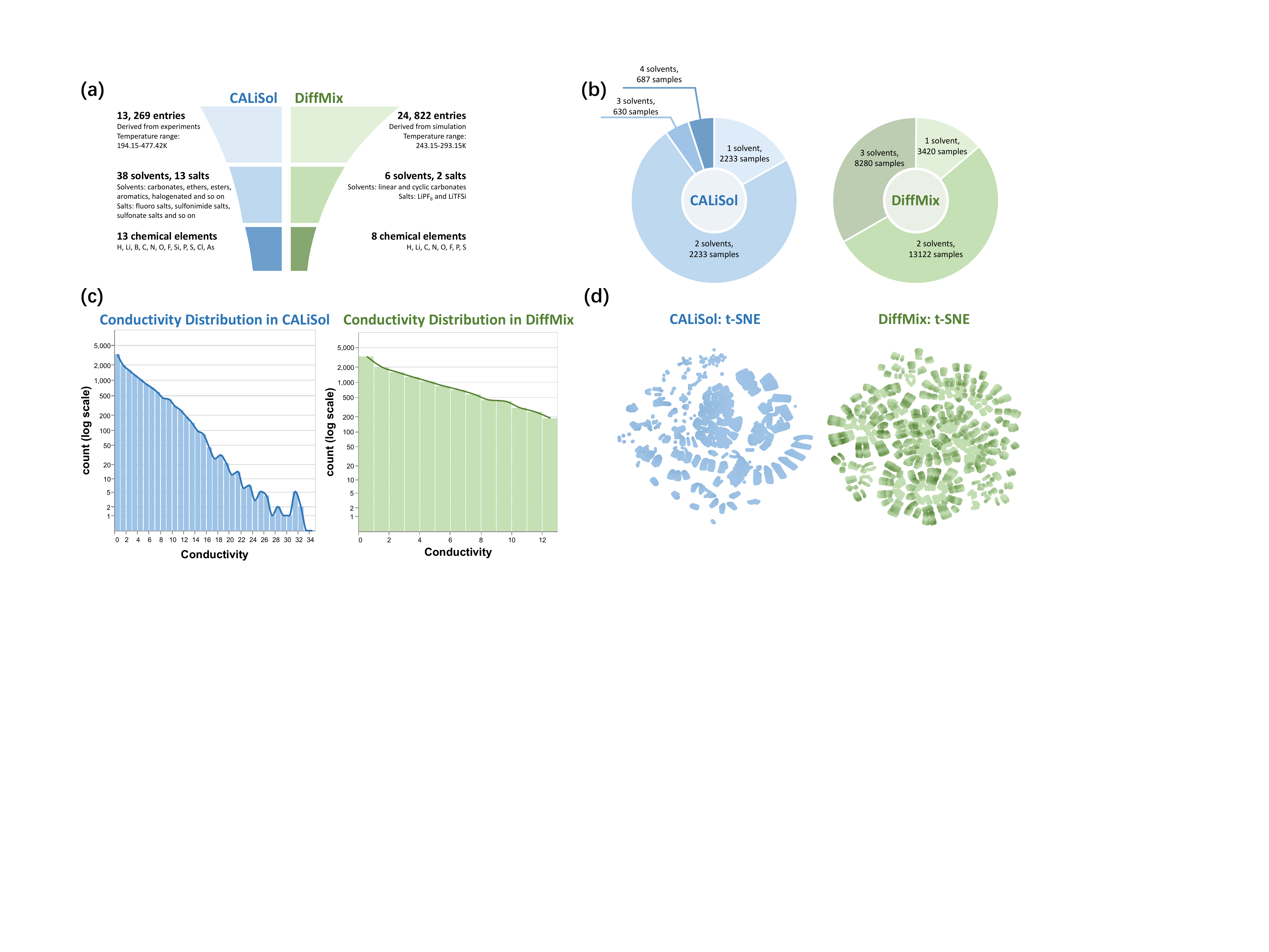}
    \vspace{-0.5cm}
    \caption{Overview of the CALiSol and DiffMix datasets.
    (a) Summary of dataset statistics. 
    (b) Composition diversity in terms of the number of solvents used per sample. CALiSol exhibits a more diverse mixture design space, while DiffMix contains a greater number of samples with 2 or 3 solvents.
    (c) Log-scaled histograms of conductivity values indicate that CALiSol exhibits a broader range and a more pronounced long-tail distribution than DiffMix dataset.
    (d) The t-SNE visualization~\citep{van2008visualizing} reveals distinct structural characteristics in each dataset. CALiSol, derived from experimental measurements, exhibits greater diversity with many scattered outliers, reflecting heterogeneous sampling. In contrast, DiffMix shows a more uniform and clustered distribution, consistent with its simulation-based origin.
    }
    \vspace{-0.2cm}
    \label{fig:dataset}
\end{figure}

\textbf{CALiSol \citep{de2024calisol} and DiffMix \citep{zhu2024differentiable} Datasets}\quad 
These two datasets provide comprehensive resources for electrolyte conductivity analysis. CALiSol contains 13,269 experimentally measured ionic conductivity entries (after filtering 556 incomplete points) spanning 13 lithium salts and 38 solvents across carbonate, ether, and ester classes, with broad coverage of salt concentrations and temperatures to reflect experimental diversity. In contrast, the simulation-driven DiffMix dataset employs systematic discretization with 24,822 formulations focusing on two salts (LiPF\textsubscript{6}, LiTFSI) and up to three solvents selected from six carbonate types, quantizing salt concentrations into seven levels and normalizing solvent mass fractions via coarse sampling. Both datasets provide metadata including solvent proportions and measurement conditions, with full parameter ranges detailed in \cref{sec:experiment_details}.

\textbf{Geometric Graph Construction}\quad 
To facilitate geometric modeling, we further augmented the dataset with geometric molecular graph representations. 
We obtained molecular conformations from PubChem~\citep{kim2025pubchem} when available. For molecules not found in PubChem, we generate 3D structures using RDKit~\citep{rdkit} or download conformations from the Materials Project~\citep{jain2013commentary}.  
Each molecule is represented as a geometric graph, where atoms are nodes and edges are formed between atoms within a cutoff distance of 6 Å.  
Edge weights correspond to the Euclidean distances between atoms.  
Node features include atomic numbers and one-hot encodings of atom types.  

\textbf{Data Splitting Strategy}\quad
For both CALiSol and DiffMix datasets, we adopt a standard random split of 70\%/20\%/10\% into train, validation, and test sets, respectively.  
Splitting is performed independently for CALiSol and DiffMix with a fixed seed to ensure reproducibility.  
No electrolyte formulation is shared across different subsets.  

\subsection{Experiment Setup}\label{ssec:exp-setup}

\textbf{Baselines}\quad
Previous studies such as MM-MoLFormer~\citep{soares2023capturing}, MolSets~\citep{zhang2023molsets}, and Uni-ELF~\citep{zeng2024unielf} have proposed individual models and conducted comparisons within limited baselines, but no prior work has provided a unified evaluation across modeling families using standardized datasets and metrics.  

We compare with diverse baselines across architectural families:
1) \textbf{MLP-based}:
We use a vanilla MLP taking the proportion vector as input, and MM-MoLFormer~\citep{soares2023capturing}, which combines pretrained molecular embeddings with proportion and environment features. These models do not respect the symmetry constraints discussed in \cref{ssec:preliminaries} and exhibit limited generalization.
2) \textbf{Topology-based}:
We adopt MolSets~\citep{zhang2023molsets}, a recent framework designed for molecular mixture modeling, and select GraphConv and GraphSAGE as representative backbones (denoted as MolSets-Conv and MolSets-SAGE).
3) \textbf{Geometry-based}:
To account for spatial effects such as steric hindrance and solvation structure, we employ equivariant GNNs including EGNN~\citep{satorras2021en} and TFN~\citep{thomas2018tensor}. Each is paired with either a composition-weighted average (-linear) or a learnable attention-based aggregator (-att), yielding four geometric baselines: EGNN-att, TFN-att, EGNN-linear, and TFN-linear.
The specific hyperparameters of the model are shown in \cref{sec:experiment_details}. 
Other recent geometry-based methods, such as DiffMix~\citep{zhu2024differentiable} and Uni-ELF~\citep{zeng2024unielf}, are not included in our baselines due to the lack of publicly available implementations.

\textbf{Implementation}\quad
For our GeoMix, we test it using two classical equivariant networks, EGNN and TFN, as the backbone for intramolecular embedding. 
Our method is consistent with the baseline in terms of hyperparameters such as the number of equivariant graph network layers and MLP layers. For other hyperparameters and other implementation details, please refer to \cref{sec:experiment_details}. We evaluate model performance using MSE and Pearson correlation coefficient ($r$) between the predicted and measured conductivity values.

\begin{table}[!t]
\centering
\vspace{-0.1in}
\caption{Results of MSE and Pearson correlation coefficient on CALiSol dataset and DiffMix dataset. Bold values indicate the best performance, while underlined values indicate the second-best.}
\label{tab:main-result}
\adjustbox{width=\textwidth, center, padding = 2pt}{
\begin{tabular}{
    p{0.17\textwidth}<{\centering}
    p{0.23\textwidth}<{\raggedright}
    p{0.15\textwidth}<{\centering}
    p{0.1\textwidth}<{\centering}
    p{0.12\textwidth}<{\centering}
    p{0.1\textwidth}<{\centering}
    p{0.12\textwidth}<{\centering}
}
\toprule
    & \multirow{2}{*}{Models} & \multirow{2}{*}{Symmetries}  & \multicolumn{2}{c}{CALiSol} & \multicolumn{2}{c}{DiffMix} \\
    & & & MSE ↓ & Pearson $r$ ↑ & MSE ↓ & Pearson $r$ ↑\\
\midrule
    \multirow{2}{*}{MLP-based}
    & MLP & \multirow{2}{*}{None} &  3.657 & 0.906 & 1.363 & 0.874 \\
    & MM-MoLFormer~\citep{soares2023capturing} & & 5.488 & 0.825 & 1.901 & 0.812 \\
\midrule
    \multirow{2}{*}{Topology-based}
    & MolSets-Conv~\citep{zhang2023molsets} & \multirow{2}{*}{Permutation} & 2.230 & 0.924 & 1.440 & 0.868 \\
    & MolSets-SAGE~\citep{zhang2023molsets} & & 2.751 & 0.909 & 0.708 & 0.937 \\
\midrule
    \multirow{4}{*}{Geometry-based}
    & EGNN-att~\citep{satorras2021en}     & \multirow{4}{*}{Set-SE(3)} & 2.666 & 0.908 & 0.752 & 0.930 \\
    & TFN-att~\cite{thomas2018tensor}     & & 1.808 & 0.946 & 0.804 & 0.921 \\
    & EGNN-linear~\citep{satorras2021en}    & & 1.461 & 0.951 & 0.195 & 0.988 \\
    & TFN-linear~\citep{thomas2018tensor}   & & 1.107 & 0.967 & 0.285 & 0.973 \\
\midrule
    \multirow{2}{*}{Geometry-based}
    & GeoMix-EGNN & \multirow{2}{*}{Set-SE(3)} & \underline{0.552} & \underline{0.985} & \underline{0.088} & \underline{0.992} \\ 
    & GeoMix-TFN & & \textbf{0.432} & \textbf{0.987} & \textbf{0.035} & \textbf{0.997} \\
\bottomrule
\end{tabular}
}
\vspace{-0.1in}
\end{table}

\subsection{Main Results}\label{ssec:main-results}

\cref{tab:main-result} summarizes the predictive performance of various baseline models and our proposed GeoMix on the CALiSol and DiffMix datasets. 
From these results, we observe the following: 
\textbf{1. Overall performance}: 
GeoMix consistently achieves the best performance across all metrics on both datasets, demonstrating its effectiveness in modeling molecular mixtures.
\textbf{2. Advantage of geometry-based methods}: 
Geometry-based methods outperform both MLP-based and standard Topology-based baselines, highlighting the importance of incorporating molecular geometry into the learning process.
\textbf{3. Benefit of higher-degree features}: 
Among geometric models, those using TFN as the backbone generally surpass those based on EGNN, suggesting that modeling higher-degree geometric features is particularly beneficial in mixture-related tasks.
We further visualize the regression performance of some selected models in \cref{fig:regression-plot}, which clearly illustrates the superiority of GeoMix-based methods in both accuracy and error consistency.

\begin{figure}
    \centering
    \input{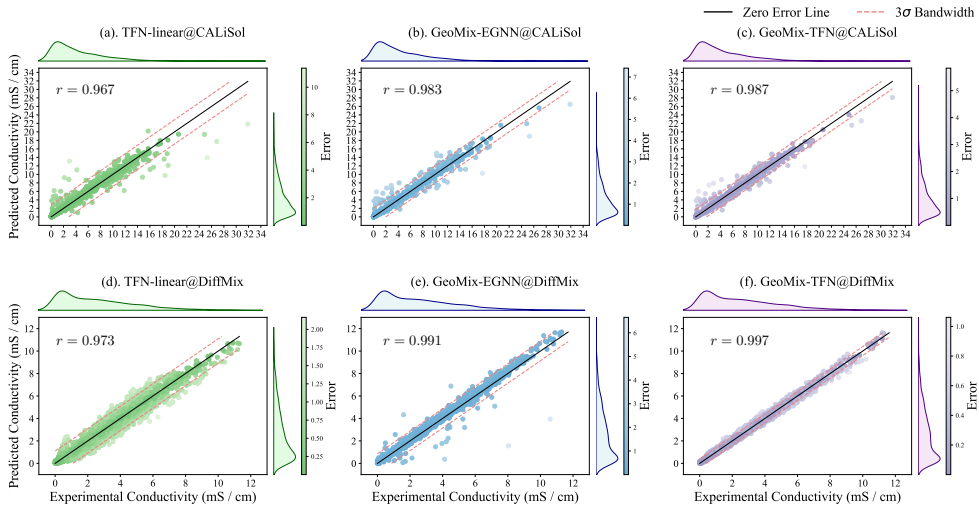}
    \vspace{-0.2cm}
    \caption{Regression plots for electrolyte conductivity prediction. 
    (a–c) show results on the CALiSol dataset, using TFN-linear, GeoMix-EGNN, and GeoMix-TFN, respectively.
    (d–f) show corresponding results on the DiffMix dataset with the same model order.}
    \label{fig:regression-plot}
\end{figure}

\subsection{Ablation Studies}\label{ssec:ablation-study}

To evaluate the individual contributions of our proposed components, we perform a series of ablation studies on the CALiSol dataset. All experiments are based on the GeoMix-EGNN model, which adopts the EGNN backbone due to its architectural simplicity and suitability for controlled analysis. 

\begin{wraptable}[14]{r}{0.4\textwidth}
\vspace{-4.5ex}
\centering
\caption{Ablations on CALiSol dataset.}
\label{tab:abl}
 \resizebox{\linewidth}{!}{
    \begin{tabular}{lcc}
    \addlinespace[5pt]
    \toprule
    CALiSol & MSE ↓ & Pearson $r$ ↑ \\ 
    \midrule
    GeoMix & 0.552 & 0.985 \\
    \midrule
    \addlinespace[-0.1pt]
    \rowcolor{gray!30}\multicolumn{3}{c}{\textit{Proportion Embedding}} \\
    \addlinespace[-2pt]
    \midrule
    Multiply & 3.657 & 0.906 \\
    \midrule
    \addlinespace[-0.1pt]
    \rowcolor{gray!30}\multicolumn{3}{c}{\textit{Transformation Matrix's Form}} \\
    \addlinespace[-2pt]
    \midrule
    Quaternion & 0.702 & 0.981 \\
    6D vector & 0.574 & 0.983 \\
    Graph-wise & 0.662 & 0.980 \\
    \midrule
    \addlinespace[-0.1pt]
    \rowcolor{gray!30}\multicolumn{3}{c}{\textit{Linear \textit{v.s.} Attention}} \\
    \addlinespace[-2pt]
    \midrule
    GeoMix-linear & 0.851 & 0.975 \\
    \midrule
    \addlinespace[-0.1pt]
    \rowcolor{gray!30}\multicolumn{3}{c}{\textit{Noisy Nodes Loss}} \\
    \addlinespace[-2pt]
    \midrule
    \textit{w/o} Noisy Nodes & 1.213 & 0.969 \\
    \bottomrule
    \end{tabular}
    }
\end{wraptable}

\textbf{Proportion Embedding}\quad
In the ablation variant denoted as "Multiply" in \cref{tab:abl}, we remove the proportion information from atomic features and instead apply scalar weighting to each per-molecule graph embedding before aggregation. This strategy mirrors traditional approaches that treat component ratios as external mixing coefficients rather than intrinsic modulators of molecular behavior.
\cref{tab:abl} reveals that embedding proportions into node features can effectively improve the effect of the model in modeling mixture systems.

\textbf{The Form of the Transformation Matrix}\quad
We conduct ablation studies to examine how different parameterizations of the transformation matrix $\mR_{ij}^{(m, n)}$ affect model performance. We consider three alternative designs:
1) \textbf{Quaternion}: The rotation matrix is derived from a learned unit quaternion, ensuring orthogonality with $\det(\mR_{ij}^{(m, n)}) = 1$ (\emph{i.e.}, rigid transformation without scaling).
2) \textbf{6D Vector}: The rotation is parameterized by two learned vectors followed by Gram-Schmidt orthogonalization, providing a continuous and differentiable representation of SO(3)~\citep{zhou2019continuity}, also enforcing $\det(\mR_{ij}^{(m, n)}) = 1$.
3) \textbf{Graph-wise}: Instead of learning node-pair-specific transformations, we learn a single transformation per graph pair by pooling node features. This enforces graph-level rigidity and ignores local structural flexibility.
From the results in \cref{tab:abl}, we observe that:
(1) Models with non-rigid, node-wise transformations (GeoMix, Quaternion, 6D vector) outperform the graph-rigid variant (Graph-wise), suggesting the importance of fine-grained local transformations.
(2) Among non-rigid variants, GeoMix, which does not constrain the transformation matrix to be orthogonal or have unit determinant, achieves the best performance. This indicates that allowing scaling transformations further enhances flexibility and expressivity in modeling cross-graph geometry.

\textbf{Graph-Level Readout}\quad
As shown in \cref{tab:main-result}, the simple linear average readout performs competitively among baseline models. To investigate its effect within our framework, we replace the original graph-level readout in GeoMix with the linear average strategy, resulting in an ablation variant denoted as GeoMix-linear.
As reported in \cref{tab:abl}, GeoMix-linear yields lower performance compared to the original GeoMix. This suggests that the full model benefits from the rich, attention-based aggregation mechanism, which we attribute to the GIN. The learned transformation-aware interactions provide expressive relational cues between molecules, enabling the self-attention module to generate more informative graph-level embeddings beyond simple averaging.

\textbf{Noisy Nodes Loss}\quad
We further evaluate the impact of the Noisy Nodes loss~\citep{godwin2022simplenoisy} by removing it from the training objective, denoted as "\textit{w/o} Noisy Nodes" in \cref{tab:abl}. 
The results demonstrate that incorporating this regularization consistently improves the model’s performance. We attribute this to its role in enhancing representation robustness by perturbing node features during training.

\section{Conclusion}

We presented GeoMix, a Set-SE(3) equivariant framework for predicting properties of molecular mixtures. GeoMix leverages equivariant graph neural networks to embed individual species and introduces a Geometric Interaction Network (GIN) to perform intermolecular message passing via pre-constructing local frames. To support systematic evaluation, we curated two large-scale datasets with geometric features, CALiSol and DiffMix, focused on electrolyte conductivity. Based on these datasets, we established a unified benchmark for electrolyte conductivity prediction. Experimental results demonstrate that GeoMix significantly outperforms existing baselines, highlighting the importance of geometry-aware modeling in mixture systems. Future work may extend this framework to other multi-component tasks in chemistry and materials science.

\section{Limitations}
\label{sec:limitations}

Despite the promising results, this work has several limitations that warrant further investigation. First, the available datasets remain limited in both scope and diversity. In particular, no datasets currently provide molecular dynamics-based microscopic structural information, and there is a lack of conductivity datasets covering a broader range of chemistries, such as sodium-ion or potassium-ion electrolytes. Second, the encoder architecture adopted in GeoMix is relatively simple, leaving room for improvement.

In future work, we plan to incorporate richer descriptors and leverage recent advances in graph neural networks to enhance both the accuracy and the generalization ability of the model~\cite{segal2025known, tan2024fedssp}. We also aim to curate and release new benchmark datasets that include microscopic structural features and broader compositional coverage, thereby enabling more comprehensive evaluation and fostering further research in mixture property prediction.

\section*{Acknowledgement}
This work was jointly supported by the following projects:
the National Natural Science Foundation of China (No. 62376276); 
Beijing Nova Program (No. 20230484278);
the Fundamental Research Funds for the Central Universities;
the Research Funds of Renmin University of China (23XNKJ19);
Public Computing Cloud, Renmin University of China.

\bibliography{Reference}
\bibliographystyle{unsrt}

\appendix
\tableofappendix
\section{Theoretical Details}\label{sec:theorical-details}

\subsection{Expressivity of Intermolecular Transformation Matrix}

\begin{lemma}\label{le:span}
For any $\mathrm{O}(n)$-equivariant function $\hat{f}(\Vec{\mZ})$, it must fall into the subspace spanned by the columns of $\Vec{\mZ}$, namely, there exists a function $s(\Vec{\mZ})$, satisfying $\hat{f}(\Vec{\mZ})=\Vec{\mZ}s(\Vec{\mZ})$.
\end{lemma}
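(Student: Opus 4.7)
The plan is to exploit the $\mathrm{O}(n)$-equivariance to kill any component of $\hat{f}(\Vec{\mZ})$ orthogonal to the columns of $\Vec{\mZ}$, and then read off $s$ from the resulting containment. Let $U = \mathrm{col}(\Vec{\mZ}) \subseteq \sR^n$ and $W = U^{\perp}$ its orthogonal complement, so $\sR^n = U \oplus W$.

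The key step is to produce a nontrivial $\mR \in \mathrm{O}(n)$ that leaves $\Vec{\mZ}$ unchanged. I would define $\mR$ to act as the identity on $U$ and as $-\mI$ on $W$; in block form $\mR = \mathrm{diag}(\mI_U, -\mI_W)$, which is clearly orthogonal. Because every column of $\Vec{\mZ}$ lies in $U$, we have $\mR\Vec{\mZ} = \Vec{\mZ}$. Applying the equivariance hypothesis then gives
\[
    \hat{f}(\Vec{\mZ}) \;=\; \hat{f}(\mR\Vec{\mZ}) \;=\; \mR\,\hat{f}(\Vec{\mZ}).
\]
Decomposing $\hat{f}(\Vec{\mZ}) = \mathbf{u} + \mathbf{w}$ along $U \oplus W$, the right-hand side becomes $\mathbf{u} - \mathbf{w}$; comparing components forces $\mathbf{w} = 0$, so $\hat{f}(\Vec{\mZ}) \in U = \mathrm{col}(\Vec{\mZ})$.

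With this containment in hand the existence of $s$ is immediate. A concrete, well-defined choice is $s(\Vec{\mZ}) = \Vec{\mZ}^{+}\hat{f}(\Vec{\mZ})$, where $\Vec{\mZ}^{+}$ denotes the Moore--Penrose pseudoinverse; it satisfies $\Vec{\mZ}\Vec{\mZ}^{+}\mathbf{v} = \mathbf{v}$ for every $\mathbf{v} \in \mathrm{col}(\Vec{\mZ})$, which verifies $\hat{f}(\Vec{\mZ}) = \Vec{\mZ}\,s(\Vec{\mZ})$. The main (mild) obstacle I anticipate is the edge case $W = \{0\}$, where $\mR = \mI$ and the symmetry argument is vacuous; but then $\hat{f}(\Vec{\mZ}) \in \sR^n = U$ trivially, so the conclusion holds. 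It is worth noting that the argument crucially leverages $\mathrm{O}(n)$ rather than $\mathrm{SO}(n)$-equivariance, since the block $-\mI_W$ has determinant $-1$ whenever $\dim W$ is odd; under only $\mathrm{SO}(n)$-equivariance, one would need $\dim W \geq 2$ and a proper rotation on $W$ instead.
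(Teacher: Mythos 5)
Your proof is correct and follows essentially the same route as the paper's: both arguments construct an orthogonal transformation that fixes $\Vec{\mZ}$ while negating its orthogonal complement, then use equivariance to force the complement component of $\hat{f}(\Vec{\mZ})$ to vanish. Your explicit choice $s(\Vec{\mZ}) = \Vec{\mZ}^{+}\hat{f}(\Vec{\mZ})$ and the remarks on the $W=\{0\}$ edge case and the $\mathrm{O}(n)$ versus $\mathrm{SO}(n)$ distinction are welcome refinements but do not change the substance of the argument.
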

\begin{proof}
The proof is given by~\cite{villar2021scalars}. Essentially, suppose $\Vec{\mZ}^{\perp}$ is the orthogonal complement of the column space of $\Vec{\mZ}$. Then there must exit functions $s(\Vec{\mZ})$ and $s^{\perp}(\Vec{\mZ})$, satisfying $\hat{f}(\Vec{\mZ})=\Vec{\mZ} s(\Vec{\mZ}) + \Vec{\mZ}^{\perp}s^{\perp}(\Vec{\mZ})$. We can always find an orthogonal transformation $\mO$ allowing $\mO\Vec{\mZ}=\Vec{\mZ}$ while $\mO\Vec{\mZ}^{\perp}=-\Vec{\mZ}^{\perp}$. With this transformation $\mO$, we have $\hat{f}(\mO\Vec{\mZ})=\hat{f}(\Vec{\mZ})=\Vec{\mZ} s(\Vec{\mZ}) + \Vec{\mZ}^{\perp}s^{\perp}(\Vec{\mZ})$, and $\mO\hat{f}(\Vec{\mZ})=\Vec{\mZ} s(\Vec{\mZ}) - \Vec{\mZ}^{\perp}s^{\perp}(\Vec{\mZ})$. The equivariance property of $\hat{f}$ implies $\Vec{\mZ} s(\Vec{\mZ}) + \Vec{\mZ}^{\perp}s^{\perp}(\Vec{\mZ})=\Vec{\mZ} s(\Vec{\mZ}) - \Vec{\mZ}^{\perp}s^{\perp}(\Vec{\mZ})$, which derives $s^{\perp}(\Vec{\mZ})=0$. Hence, the proof is concluded. 
\end{proof}

\begin{lemma}\label{le:inv}
If the $\mathrm{O}(n)$-equivariant function $\hat{f}(\Vec{\mZ})$ lies in the subspace spanned by the columns of $\Vec{\mZ}$, then there exists a function $\sigma$ satisfying $\hat{f}(\Vec{\mZ}) =\Vec{\mZ}\sigma(\Vec{\mZ}^\top\mZ)$.
\end{lemma}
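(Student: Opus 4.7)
The plan is to combine the previous lemma with the orbit characterization of $\mathrm{O}(n)$ via the Gram matrix. By Lemma~\ref{le:span} we already have $\hat{f}(\Vec{\mZ}) = \Vec{\mZ}\, s(\Vec{\mZ})$ for some function $s$, so the task reduces to showing that $s$ depends on $\Vec{\mZ}$ only through $\Vec{\mZ}^{\top}\Vec{\mZ}$. The two substantive steps are: (i) upgrade $s$ to an $\mathrm{O}(n)$-invariant function, and (ii) observe that the $\mathrm{O}(n)$-orbits of matrices under left multiplication are classified exactly by the Gram matrix.

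For step (i), rather than trying to cancel $\mO \Vec{\mZ}$ in the naive identity $\mO \Vec{\mZ}\,(s(\mO \Vec{\mZ}) - s(\Vec{\mZ})) = 0$---which only works when $\Vec{\mZ}$ has full column rank---I would make the choice of $s$ canonical via the Moore--Penrose pseudoinverse, setting $s(\Vec{\mZ}) \coloneqq \Vec{\mZ}^{+}\hat{f}(\Vec{\mZ})$. Since $\hat{f}(\Vec{\mZ})$ lies in the column span of $\Vec{\mZ}$ by hypothesis, this still satisfies $\hat{f}(\Vec{\mZ}) = \Vec{\mZ}\, s(\Vec{\mZ})$. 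Invariance then follows from the identity $(\mO \Vec{\mZ})^{+} = \Vec{\mZ}^{+}\mO^{\top}$ for orthogonal $\mO$ together with the equivariance of $\hat{f}$, yielding $s(\mO \Vec{\mZ}) = \Vec{\mZ}^{+}\mO^{\top}\mO\,\hat{f}(\Vec{\mZ}) = s(\Vec{\mZ})$.

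For step (ii), I would invoke the classical fact that two matrices $\Vec{\mZ}_{1}, \Vec{\mZ}_{2} \in \mathbb{R}^{n\times k}$ satisfy $\Vec{\mZ}_{1}^{\top}\Vec{\mZ}_{1} = \Vec{\mZ}_{2}^{\top}\Vec{\mZ}_{2}$ if and only if $\Vec{\mZ}_{2} = \mO \Vec{\mZ}_{1}$ for some $\mO \in \mathrm{O}(n)$ (provable via a QR or polar decomposition argument). Combined with step (i), this means $s$ descends to a well-defined function $\sigma$ on the image of $\Vec{\mZ} \mapsto \Vec{\mZ}^{\top}\Vec{\mZ}$: setting $\sigma(G) \coloneqq s(\Vec{\mZ})$ for any representative with $\Vec{\mZ}^{\top}\Vec{\mZ} = G$ gives the desired factorization $\hat{f}(\Vec{\mZ}) = \Vec{\mZ}\,\sigma(\Vec{\mZ}^{\top}\Vec{\mZ})$. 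The step I expect to be the main obstacle is step (i)---specifically, handling the rank-deficient locus, which the pseudoinverse construction sidesteps cleanly but which would otherwise require a separate density-and-continuity argument; verifying the pseudoinverse identity $(\mO \Vec{\mZ})^{+} = \Vec{\mZ}^{+}\mO^{\top}$ for general (possibly rank-deficient) $\Vec{\mZ}$ is the one routine but non-trivial calculation I would still have to carry out in full.
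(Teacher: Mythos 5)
Your proposal is correct and follows the same two-step structure as the paper's proof, which simply defers to Corollary~2 and Lemmas~1--2 of~\cite{huang2022equivariant}: first upgrade the coefficient function in $\hat{f}(\Vec{\mZ})=\Vec{\mZ}s(\Vec{\mZ})$ to an $\mathrm{O}(n)$-invariant one, then use the fact that invariant functions factor through the Gram matrix $\Vec{\mZ}^\top\Vec{\mZ}$ (equivalently, that the Gram matrix classifies $\mathrm{O}(n)$-orbits under left multiplication). Your canonical choice $s(\Vec{\mZ})\coloneqq\Vec{\mZ}^{+}\hat{f}(\Vec{\mZ})$ via the Moore--Penrose pseudoinverse, together with the identity $(\mO\Vec{\mZ})^{+}=\Vec{\mZ}^{+}\mO^{\top}$, is a clean self-contained way to carry out the first step that also handles the rank-deficient case the paper leaves implicit in its citation.
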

\begin{proof}
The proof is provided by Corollary 2 in~\cite{huang2022equivariant}. The basic idea is that $\hat{f}(\Vec{\mZ})$ can be transformed to $\hat{f}(\Vec{\mZ})=\Vec{\mZ}\eta(\Vec{\mZ})$ where $\eta(\Vec{\mZ})$ is $\mathrm{O}(n)$-invariant. According to Lemma 1-2 in~\cite{huang2022equivariant}, $\eta(\Vec{\mZ})$ must be written as $\eta(\Vec{\mZ})=\sigma(\Vec{\mZ}^\top\Vec{\mZ})$, which completes the proof. 
\end{proof}

\outerProduct*
\begin{proof}
First, we want to show that, $\overleftrightarrow{\mI}_{\!\!\! mn}(\gmF_m, \gmF_n))$ is equivalent to $\overleftrightarrow{\mI}_{\!\!\! mn}(\gmF_m\otimes \gmF_n))$. We know the Kronecker product $\gmF_m\otimes \gmF_n$ contains $9\times 9$ elements, which could be divided into 9 small matrices of $3\times 3$, each corresponding to the product of a row in $\gmF_m=[\gvu_1;\gvu_2;\gvu_3]$ and a column in $\gmF_n=[\gvv_1^{\top};\gvv_2^{\top};\gvv_3^{\top}]$, \emph{i.e.} $\gmF_m\otimes \gmF_n=\begin{bmatrix}\gvu_i\otimes \gvv_j^{\top}\end{bmatrix}$. Note $\gvu_i\otimes \gvv_j^{\top}$ is a rank-one matrix and $\gvu_i,\gvv_j$ are both unit vectors. Therefore, $\gvu_i\otimes \gvv_j^{\top}$ corresponds to two possibilities: $(\gvu_i, \gvv_j^{\top})$ or $(-\gvu_i, -\gvv_j^{\top})$. By analyzing the 9 submatrices, we can find that $\gmF_m\otimes \gmF_n$ also corresponds to two possible decomposition, namely $(\gmF_m, \gmF_n)$ and $(-\gmF_m, -\gmF_n)$. Note that during the construction, we agreed that $\det(\gmF_m)=\det(\gmF_n)=1$, so there is a direct bijection between $\gmF_m\otimes \gmF_n$ and $(\gmF_m, \gmF_n)$, which means $\overleftrightarrow{\mI}_{\!\!\! mn}(\gmF_m, \gmF_n))$ is equivalent to $\overleftrightarrow{\mI}_{\!\!\! mn}(\gmF_m\otimes \gmF_n))$.

Then, we use the vectorization operator $\VEC(\cdot)$ to prove $\overleftrightarrow{\mI}_{\!\!\! mn}(\gmF_m\otimes \gmF_n))$ must take the form:
\begin{equation}
    \VEC(\overleftrightarrow{\mI}_{\!\!\! mn}(\gmF_m\otimes \gmF_n))=\VEC(\gmF_m \mR \gmF_n^\top)=(\gmF_n\otimes\gmF_m)\VEC(\mR).
\end{equation}
Note that $\gmF_n, \gmF_m$ are constructed frames, which is full-rank. Then we find $\gmF_n\otimes\gmF_m$ will be also full-rank, spanning the whole space $\sR^{9}$ where $\VEC(\overleftrightarrow{\mI}_{\!\!\! mn}(\gmF_m, \gmF_n)$ lies in. According to \cref{le:inv}, we know $\VEC(\overleftrightarrow{\mI}_{\!\!\! mn})$ must satisfy:
\begin{equation}
    \VEC(\overleftrightarrow{\mI}_{\!\!\! mn})=(\gmF_n\otimes\gmF_m)\sigma((\gmF_n\otimes\gmF_m)^\top(\gmF_n\otimes\gmF_m)).
\end{equation}
Moreover, we have such formula:
\begin{equation}
    (\gmF_n\otimes\gmF_m)^\top(\gmF_n\otimes\gmF_m)=(\gmF_n^{\top}\gmF_n)\otimes (\gmF_m^{\top}\gmF_m) = \mI_{3\times 3}\otimes\mI_{3\times 3}=\mI_{9\times 9},
\end{equation}
which is a constant matrix. Thus $\sigma((\gmF_n\otimes\gmF_m)^\top(\gmF_n\otimes\gmF_m))=\sigma(\mI_{9\times 9})$ will be learnable parameters, which are equivalent to the $\VEC(\mR)$ here. Up to this point, the matrix $\overleftrightarrow{I}_{\!\!\! mn}$ appears to exhibit $\mathrm{O}(3)$-equivariance. However, it is crucial to observe that the frame constructions $\mathcal{F}_n$ and $\mathcal{F}_m$ are only $\mathrm{SO}(3)$-equivariant. Consequently, the overall transformation reduces to $\mathrm{SO}(3)$-equivariance. This completes the proof.
\end{proof}

\subsection{Equivariance/Invariance of GeoMix}\label{ssec:equiv-of-geomix-proof}

\subsubsection{Enhancement of Equivariance/Invariance}
In order to prove the equivariance/invariance of the entire GeoMix model, we need to strengthen the conditions.
\begin{theorem}[Enhancement of Equivariance/Invariance]\label{th:enhancement}
If each module of GeoMix is equivariant/invariant, then the entire GeoMix will also be equivariant/invariant.
\end{theorem}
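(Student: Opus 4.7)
The plan is to reduce the claim to elementary functional composition of group-equivariant maps. Conceptually, GeoMix decomposes as
\begin{equation*}
\varphi = \varphi_{\text{pred}} \circ \varphi_{\text{read}} \circ \varphi_{\text{GIN}}^{(L)} \circ \cdots \circ \varphi_{\text{GIN}}^{(1)} \circ \varphi_{\text{frame}} \circ \varphi_{\text{enc}} \circ \varphi_{\text{prop}},
\end{equation*}
where each factor is a well-defined module map between prescribed input and output spaces. The Set-SE(3) symmetry group is the product $G = \bigl(\prod_{m=1}^M \mathrm{SO}(3)_m\bigr) \times \bigl(\prod_{m=1}^M S_{N_m}\bigr) \rtimes S_M$, with translation absorbed into per-graph centering. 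I would first make the action of $G$ explicit on every intermediate representation space and then invoke two standard facts: (i) the composition of two $G$-equivariant maps is $G$-equivariant, and (ii) the composition of a $G$-invariant map with a $G$-equivariant map is $G$-invariant.

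Under the hypothesis of the theorem, every listed module is $G$-equivariant or $G$-invariant with respect to these actions. A straightforward induction on the module index then yields the result: if the intermediate tuple after module $k$ transforms correctly under $g \in G$, then applying the assumed equivariance of module $k+1$ shows that the tuple after module $k+1$ also transforms correctly. The induction over the stacked GIN layers is identical, layer by layer, and gives Set-$\mathrm{SO}(3)$ equivariance of the entire block. Since $\varphi_{\text{pred}}$ outputs the scalar conductivity $\kappa$ and is invariant by hypothesis, the overall composition producing $\kappa$ is invariant, matching the requirement stated after \cref{eq:graph-level-perm}. Stopping the induction one step earlier supplies the equivariance of the intermediate node- and graph-level outputs $\{\mH'_m, \gmV'_m\}$ appearing in \cref{eq:atom-level-permutation,eq:atom-level-rot,eq:graph-level-perm}, and the same argument handles the auxiliary denoise output $\gmX_m^{\text{denoise}}$ used in $\Ls_{\text{denoise}}$.

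The main obstacle is not the chaining itself but the bookkeeping of how the same $g \in G$ acts on each intermediate representation. Raw atom coordinates transform via the per-molecule $\mathrm{SO}(3)_m$, the local frame $\gmF_m$ produced by PCA transforms by the same $\mathrm{SO}(3)_m$ acting on the left, and the cross-frame transformation $\overleftrightarrow{\mI}_{\!\!\! ij}^{(m,n)}$ picks up conjugation by $(\mR_m, \mR_n)$ on opposite sides as dictated by \cref{eq:rot_I}. I would explicitly verify that these prescribed actions are mutually compatible, i.e.\ they form the same $G$-representation at the interface between consecutive modules, so that the equivariance assumed on each module can be chained without a representational mismatch. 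Graph-level permutation $\sigma \in S_M$ acts by jointly reindexing tuples of frames and embeddings, which must also be tracked through GIN's double sum in \cref{eq:agg}. Once this compatibility table is written out, the statement follows immediately from functorial composition, and no new mathematical content beyond what the individual module lemmas supply is required.
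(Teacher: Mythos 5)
Your proposal is correct and follows essentially the same route as the paper: both arguments reduce the claim to the standard fact that a chain of $G$-equivariant maps lets the group element be pushed through each intermediate representation, and that a single invariant module anywhere in the chain absorbs it, rendering the full composition invariant. Your additional bookkeeping of the explicit Set-SE(3) group structure and the interface actions (e.g.\ the conjugation of $\overleftrightarrow{\mI}_{\!\!\! ij}^{(m,n)}$ by $(\mR_m,\mR_n)$) is a more detailed instantiation of what the paper abstracts as the representations $\rho_{\gX^{(i)}}(g)$, not a different argument.
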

\begin{proof}
Consider a sequence composed of functions $\{\phi_i:\gX^{(i-1)}\rightarrow\gX^{(i)}\}_{i=1}^N$ equivariant to a same group $G$, the equivariance lead to an interesting property that  
\begin{equation*}
    \phi_N\circ\cdots\circ\phi_{i+1}\circ \rho_{\gX^{(i)}}(g)\phi_{i}\circ\cdots\circ\phi_1=\phi_N\circ\cdots\circ\phi_{j+1}\circ \rho_{\gX^{(j)}}(g)\phi_{j}\circ\cdots\circ\phi_1,
\end{equation*}
holds for all $i,j=1,2,\dots,N$ and $g\in G$, which means that the group elements $g$ can be freely exchanged in the composite sequence of equivariant functions. In particular, if one of the equivariant functions (\emph{e.g.} $\phi_k$) is replaced by an invariant function, the group element $g$ will be absorbed, that means 
\begin{equation*}
    \phi_N\circ\cdots\circ\phi_k\circ\cdots\circ\phi_{i+1}\circ \rho_{\gX^{(i)}}(g)\phi_{i}\circ\cdots\circ\phi_1=\phi_N\circ\cdots\circ\phi_1.
\end{equation*}
holds for all $g\in G$ but only $i=1,2,\dots,k$. Although $\phi_N\circ\cdots\circ\phi_k$ is still equivariant, because the group elements must be input starting from $\phi_1$, the overall $\phi_N\circ\cdots\circ\phi_1$ is still an invariant function.
\end{proof}
Specifically, we need to prove the following:
\begin{enumerate}
    \item $\mathrm{SO(3)}$-equivariance of local frame.
    \item $\mathrm{SO(3)}$-equivariance of learnable transformations.
\end{enumerate}

\subsubsection{Equivariance of Local Frame}
\begin{theorem}[Equivariance of Local Frame]
The construction of local frame is $\mathrm{SO(3)}$-equivariance.
\end{theorem}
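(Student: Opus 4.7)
The plan is to show that the map $\gmX_m \mapsto \gmF_m$ satisfies $\gmF_m(\mR \gmX_m) = \mR\, \gmF_m(\gmX_m)$ for every $\mR \in \mathrm{SO}(3)$. I would decompose the argument according to the three stages of the construction described in \cref{ssec:model-arch}: forming the covariance matrix, computing the eigendecomposition, and enforcing a right-handed orientation.

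For the first stage, since coordinates are centered beforehand, a direct algebraic computation yields
\begin{equation*}
\Cov(\mR\gmX_m) = \tfrac{1}{N_m}(\mR\gmX_m)(\mR\gmX_m)^\top = \mR\,\Cov(\gmX_m)\,\mR^\top.
\end{equation*}
Hence if $\Cov(\gmX_m) = \gmF_m\Lambda_m\gmF_m^\top$ is an eigendecomposition of the original covariance, then $(\mR\gmF_m)\Lambda_m(\mR\gmF_m)^\top$ is a valid eigendecomposition of the rotated covariance with exactly the same spectrum. So $\mR\gmF_m$ is at least an admissible candidate frame.

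The main obstacle is the non-uniqueness of eigenvectors: even when the three eigenvalues are distinct, each column of $\gmF_m$ is only determined up to a sign, so a naive eigensolver will not produce an $\mathrm{SO}(3)$-equivariant output. I would therefore invoke the equivariance-preserving PCA variant referenced in \cref{ssec:eval-frame-construct}, which fixes a canonical sign for each eigenvector via a tie-breaking rule that itself depends equivariantly on $\gmX_m$ (for instance, orienting each axis so that its inner product with a data-derived reference direction is non-negative). Under the genericity assumption that the eigenvalues are distinct, rotating $\gmX_m$ by $\mR$ also rotates the reference directions by $\mR$, so the sign choice commutes with $\mR$ and we obtain $\gmF_m(\mR\gmX_m) = \mR\,\gmF_m(\gmX_m)$ before the determinant correction.

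Finally, I would check that the determinant flip preserves equivariance. Since $\det(\mR) = 1$ for every $\mR \in \mathrm{SO}(3)$, we have $\det(\mR\gmF_m) = \det(\gmF_m)$, so the flip of the third axis is triggered for the original and the rotated input simultaneously, or for neither; because the flip itself acts on the right (i.e., on columns) while $\mR$ acts on the left, the two operations commute, and $\mathrm{SO}(3)$-equivariance of the final frame follows. Putting the three stages together gives the theorem; note that the argument genuinely breaks for $\mathrm{O}(3)$ rather than $\mathrm{SO}(3)$, since a reflection would flip the sign of $\det(\gmF_m)$ and thus interact with the determinant correction, which is why only $\mathrm{SO}(3)$-equivariance (as used in the expressivity theorem) can be claimed.
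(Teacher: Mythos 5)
Your proof is correct and is in fact more complete than the paper's own argument. The paper's proof consists only of your first stage: it computes $\Cov(\mQ\gmX_m)=\mQ\Cov(\gmX_m)\mQ^\top=(\mQ\gmF_m)\Lambda_m(\mQ\gmF_m)^\top$ and then concludes. That shows only that $\mQ\gmF_m$ is \emph{an admissible} eigenbasis of the rotated covariance, not that the eigensolver actually returns it; the sign ambiguity of the eigenvectors and the determinant flip --- the two issues you treat explicitly in your second and third stages --- are not addressed in the proof itself (the paper relegates the sign issue to a footnote and to the ``strict PCA'' of \cref{ssec:eval-frame-construct}). Your handling of both is sound: under the genericity assumption of distinct eigenvalues, a sign convention that depends equivariantly on $\gmX_m$ commutes with $\mR$, and the determinant correction is a right-multiplication by $\mathrm{diag}(1,1,-1)$ that commutes with left-multiplication by $\mR$ and is triggered consistently since $\det(\mR)=1$; your closing remark on why the statement fails for $\mathrm{O}(3)$ matches an observation the paper only makes later, inside the proof of the expressivity theorem. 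One caveat worth flagging: the concrete tie-breaking rule the paper actually uses in \cref{eq:strict-PCA} scores candidates by $\bm{1}^\top\gmF_m'\bm{1}$ against the \emph{fixed} all-ones vector, which is not a data-derived direction and therefore does not obviously commute with $\mR$; your argument is valid for the equivariant sign convention you describe, but the paper's stated rule does not literally instantiate it, so the genuinely rigorous version of this theorem rests on your formulation rather than on either construction the paper writes down.
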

\begin{proof}
We prove that the local frame $\gmF_m$ is $\mathrm{SO(3)}$-equivariant with respect to the target graph $\gG_m$. More formally, for any orthogonal matrix $\mQ\in\sR^{3 \times 3}$, the local frame should satisfy:
\begin{equation}
    \mQ\gmF_m=\texttt{LocalFrame}(\mQ\gmX_m).
\end{equation}

 Formally, we have
 \begin{equation}
 \begin{aligned}
     \Cov(\mQ\gmX_m) &= \frac{1}{N_m} \mQ\gmX_m (\mQ\gmX_m)^\top =\mQ\frac{1}{N_m}\gmX_m \gmX_m^\top \mQ^\top\\
     &=\mQ\Cov(\gmX_m)\mQ^\top=\mQ\gmF_m \Lambda_m \gmF_m^\top \mQ^\top\\
     &=(\mQ\gmF_m)\Lambda_m(\mQ\gmF_m)^\top.
 \end{aligned}
 \end{equation}
 
Concluding we showed that an $\mathrm{SO(3)}$ transformation on the original coordinate matrix $\gmX_m$ results in the same transformation on the
output local frame $\gmF_m$.
\end{proof}

\subsubsection{Equivariance of Learnable Transformations}
\begin{theorem}[Equivariance of Learnable Transformations]
The construction of learnable transformations is $\mathrm{SO(3)}$-equivariance.
\end{theorem}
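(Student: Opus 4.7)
The plan is to reduce the claim to two ingredients already available in the paper: (i) the invariance of the scalar coupling $\vz_{ij}^{(m,n)}$ in \cref{eq:inv-interaction} under independent $\mathrm{SO}(3)$ rotations $\mR_m,\mR_n$ of the two graphs, and (ii) the $\mathrm{SO}(3)$-equivariance of the local frames, $\gmF_m\mapsto\mR_m\gmF_m$ and $\gmF_n\mapsto\mR_n\gmF_n$, proved in the preceding theorem. With these in hand, the transformation law for $\overleftrightarrow{\mI}_{\!\!\! ij}^{(m,n)}$ and $\gvt_{ij}^{\,(m,n)}$ follows by direct substitution.

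First I would verify invariance of $\vz_{ij}^{(m,n)}$. Its arguments are the scalar node features $\vh_i^{(m)},\vh_j^{(n)}$ and the norms $\|\gvx_i^{(m)}\|,\|\gvx_j^{(n)}\|,\|\gvv_i^{(m)}\|,\|\gvv_j^{(n)}\|$. Scalar features are invariant by definition; norms of vectors are invariant under any orthogonal transformation, so $\|\mR_m\gvx_i^{(m)}\|=\|\gvx_i^{(m)}\|$ and likewise for the other entries. Since $\sigma_{\texttt{inv}}$ is a plain MLP applied to invariant scalars, $\vz_{ij}^{(m,n)}$ is invariant, and therefore so are the MLP outputs $\mR_{ij}^{(m,n)}=\sigma_{\texttt{rot}}(\vz_{ij}^{(m,n)})$ and $\vt_{ij}^{(m,n)}=\sigma_t(\vz_{ij}^{(m,n)})$.

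Next I would combine this with frame equivariance. Under $\gmX_m\mapsto\mR_m\gmX_m$ and $\gmX_n\mapsto\mR_n\gmX_n$ we have $\gmF_m\mapsto\mR_m\gmF_m$ and $\gmF_n\mapsto\mR_n\gmF_n$, while $\mR_{ij}^{(m,n)}$ and $\vt_{ij}^{(m,n)}$ are unchanged. Substituting into \cref{eq:rot_I,eq:tl_t} gives
\begin{align*}
\overleftrightarrow{\mI}_{\!\!\! ij}^{(m,n)} &\mapsto (\mR_m\gmF_m)\,\mR_{ij}^{(m,n)}\,(\mR_n\gmF_n)^\top = \mR_m\bigl(\gmF_m\mR_{ij}^{(m,n)}\gmF_n^\top\bigr)\mR_n^\top = \mR_m\,\overleftrightarrow{\mI}_{\!\!\! ij}^{(m,n)}\,\mR_n^\top, \\
\gvt_{ij}^{\,(m,n)} &\mapsto (\mR_m\gmF_m)\,\vt_{ij}^{(m,n)} = \mR_m\,\gvt_{ij}^{\,(m,n)},
\end{align*}
which is exactly the claimed bi-equivariant transformation law for $\overleftrightarrow{\mI}$ and the ordinary equivariance for $\gvt$.

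The one subtlety, which I would flag as the main obstacle, is that the argument assumes the inputs fed to layer $L+1$ continue to satisfy the scalar-invariant / vector-equivariant typing. This is not automatic for a deep stack: one must check inductively that after the message construction in \cref{eq:inv-message,eq:message}, the aggregation in \cref{eq:agg}, and the update in \cref{eq:update}, the features $\vh_i^{(m)}$ remain invariant and $\gvv_i^{(m)}$ remain equivariant. The invariance of $\vm_{ij}^{(m,n)}$ follows by the same norm-based reasoning used for $\vz_{ij}^{(m,n)}$, and the equivariance of $\gvm_{ij}^{(m)}$ follows since $\gvm_{ij}^{(m,n)}$ is an invariant-weighted combination of $\gvx_j^{(n)}$ and $\gvv_j^{(n)}$ (hence $\mR_n$-equivariant), and the transformation by $\overleftrightarrow{\mI}_{\!\!\! ij}^{(m,n)}$ plus $\gvt_{ij}^{\,(m,n)}$ exactly converts it to an $\mR_m$-equivariant quantity. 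Invoking \cref{th:enhancement} then propagates equivariance through all $L$ layers, completing the proof.
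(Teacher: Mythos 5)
Your proposal is correct and follows essentially the same route as the paper's proof: establish invariance of the scalar couplings via norms, invoke the $\mathrm{SO}(3)$-equivariance of the PCA frames, and substitute into \cref{eq:rot_I,eq:tl_t,eq:message} to obtain $\overleftrightarrow{\mI}_{\!\!\! ij}^{(m,n)}\mapsto\mR_m\overleftrightarrow{\mI}_{\!\!\! ij}^{(m,n)}\mR_n^\top$ and the source/target equivariance of the messages. Your explicit inductive check that \cref{eq:agg,eq:update} preserve the invariant/equivariant typing across layers is a point the paper delegates to its composition theorem (\cref{th:enhancement}) rather than spelling out, but it is not a different argument.
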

\begin{proof}
We prove the message $\gvm_{ij}^{(m)}$ and $\gvm_{ij}^{(m, n)}$ in \cref{eq:message} is $\mathrm{SO(3)}$ equivariant on the target graph $\gG_m$ and the source graph $\gG_n$ respectively, that is, for any orthogonal matrix $\mQ_m, \mQ_n\in\sR^{3\times3}$, the message should satisfy:

\begin{equation}
\begin{aligned}
    \mQ_n\gvm_{ij}^{(m, n)}=&\texttt{MessageSource}(\mQ_n\gmX_n, \mQ_n\gmV_n),\\
    \mQ_m\gvm_{ij}^{(m)}=&\texttt{MessageTarget}(\mQ_m\gmX_m).
\end{aligned}
\end{equation}

It is easy to see $\vm_{ij}^{(m, n)}$ is invariant on both the source and the target graph. So,
\begin{equation}
\begin{aligned}
    &\sigma_{\gvx}(\vm_{ij}^{(m, n)}) \mQ_n\gvx_j^{(n)} + \sigma_{\gvv}(\vm_{ij}^{(m, n)})\mQ_n \gvv_j^{(n)}\\ =&
        \mQ_n\sigma_{\gvx}(\vm_{ij}^{(m, n)}) \gvx_j^{(n)} + 
        \mQ_n\sigma_{\gvv}(\vm_{ij}^{(m, n)}) \gvv_j^{(n)}\\
        =&\mQ_n\gvm_{ij}^{(m, n)},
\end{aligned}
\end{equation}
which is $\mathrm{SO(3)}$ equivariant on the source graph.

We have proven that the local frame $\gmF_m$ is $\mathrm{SO(3)}$ equivariant on the target graph, so $\gmF_n$ is $\mathrm{SO(3)}$ equivariant on the source graph. We have:
\begin{equation}
\begin{aligned}
    &\mQ_m\gmF_m \mR_{ij}^{(m,n)} (\mQ_n\gmF_n)^\top \mQ_n\gvm_{ij}^{(m, n)}+\mQ_m\gmF_m \vt_{ij}^{(m,n)}\\
    =&\mQ_m\gmF_m \mR_{ij}^{(m,n)} \gmF_n^\top \mQ_n^\top \mQ_n\gvm_{ij}^{(m, n)}+\mQ_m\gvt_{ij}^{\ (m)}\\
    =&
    \mQ_m\overleftrightarrow{\mI}_{\!\!\! ij}^{(m, n)} \gvm_{ij}^{(m, n)} + 
    \mQ_m\gvt_{ij}^{\ (m)}\\
    =&\mQ_m\gvm_{ij}^{(m)},
\end{aligned}
\end{equation}
which is $\mathrm{SO(3)}$ equivariant on the target graph.
\end{proof}

\section{More Experiment Details and Results}
\label{sec:experiment_details}

Our code and dataset are available at \url{https://github.com/GLAD-RUC/GeoMix}. 

\subsection{Dataset Details}
\label{ssec:dataset-details}

In this part, we provide supplementary details and visualizations of the datasets introduced in the main text. These include the distribution of molecular species, property ranges, and composition coverage, as illustrated in \cref{tab:dataset-stats}. 

\begin{table}[h]
    \centering
    \caption{Summary statistics of the CALiSol~\citep{de2024calisol} and DiffMix~\citep{zhu2024differentiable} datasets, including the number of samples, solvent and salt types, elemental coverage, maximum number of solvents per formulation, solute concentration ranges, and temperature ranges.}
    \label{tab:dataset-stats}
    \adjustbox{width=\textwidth, center, padding = 2pt}{
    \begin{tabular}{
    p{0.27\textwidth}<{\raggedright}
    p{0.5\textwidth}<{\raggedright}
    p{0.22\textwidth}<{\raggedright}
}
        \toprule
        & \textbf{CALiSol} & \textbf{DiffMix} \\
        \midrule
        Number of samples & 13,269  & 24,822    \\
        \midrule
        Number of unique solvents   & 38  & 6 \\
        \midrule
        Solvent list     & EC, PC, DMC, DEC, DME, DMSO, AN, MOEMC, TFP, EA, MA, FEC, DOL, 2-MeTHF, DMM, Freon 11, Methylene chloride, THF, Toluene, Sulfolane, 2-Glyme, 3-Glyme, 4-Glyme, 3-Me-2-Oxazolidinone, 3-MeSulfolane, Ethyldiglyme, DMF, Ethylbenzene, Ethylmonoglyme, Benzene, g-Butyrolactone, Cumene, Propylsulfone, Pseudocumeme, TEOS, m-Xylene, o-Xylene & EC, PC, FEC, EMC, DEC, DMC \\
        \midrule
        Number of unique salts      & 13  & 2 \\
        \midrule
        Salt list        & LiPF\textsubscript{6}, LiBF\textsubscript{4}, LiFSI, LiTDI, LiPDI, LiTFSI, LiClO\textsubscript{4}, LiAsF\textsubscript{6}, LiBOB, LiCF\textsubscript{3}SO\textsubscript{3}, LiBPFPB, LiBMB, LiN(CF\textsubscript{3}SO\textsubscript{2})\textsubscript{2} & LiPF\textsubscript{6}, LiTFSI \\
        \midrule
        Elements covered  & H, Li, B, C, N, O, F, Si, P, S, Cl, As & H, Li, C, N, O, F, P, S \\
        \midrule
        Maximum number of solvents per sample & 4 & 3 \\
        \midrule
        Solute composition range & 0.0286-2.37 mol/kg and 0.0771-4.00 mol/L & 0.025-3.0 mol/kg \footnotemark \\
        \midrule
        Temperature range & 194.15-477.42K & 243.15K-293.15K \\
        \bottomrule
    \end{tabular}
}
\end{table}

\footnotetext{The salt concentrations are discretized into seven levels: \{0.025, 0.5, 1.0, 1.5, 2.0, 2.5, 3.0\} molal. For each formulation, the solvent mass fractions vary from 0 to 1 in increments of 0.2 and are normalized to sum to one.}

\subsection{Implementation Details}
\label{ssec:implementation-details}

In this part, we describe the implementation details. Each model architecture remains the same in both datasets. 

\begin{itemize}
    \item \textbf{MLP}: A 3-layer multilayer perceptron is used, with each hidden layer having 64 dimensions. 
    \item \textbf{MM-MoLFormer}: We follow the settings in \cite{soares2023capturing}. MM-MolFormer uses a pre-trained MoLFormer model~\citep{ross2022large}, obtained from \url{https://ibm.ent.box.com/v/MoLFormer-data}. The molecular embeddings are concatenated with the corresponding proportion and temperature features, and then passed through a 2-layer MLP with hidden size 64 to generate the final prediction.
    \item \textbf{MolSets}: For both MolSets-Conv and MolSets-SAGE, we retain the original hyperparameter settings from \cite{zhang2023molsets}, except that the hidden dimension of each graph neural network layer is increased to 64. 
    \item \textbf{EGNN-att}: This variant employs a 4-layer EGNN backbone with 64 hidden dimensions per layer. The attention module takes 8 molecular embeddings as input, with 4 attention heads and 3 stacked attention layers. The resulting representation is concatenated with temperature $T$ and salt concentration $c$, and then passed through a 3-layer MLP for prediction.
    \item \textbf{TFN-att}: TFN-att shares the same configuration as EGNN-att except for the backbone, which is replaced with a TFN. The TFN backbone uses a maximum angular momentum quantum number of $\max L=2$, and each irreducible representation (irrep) has 8 channels.
    \item \textbf{EGNN-linear}: This model uses the same EGNN backbone as EGNN-att. Instead of an attention module, a simple non-learnable linear average (scatter) operation is applied to obtain the mixture representation
    \item \textbf{TFN-linear}: This model adopts the same TFN backbone as TFN-att, and replaces the attention module with a non-learnable linear average operation, as in EGNN-linear.
    \item \textbf{GeoMix-EGNN}: GeoMix-EGNN adopts a multi-channel EGNN as the backbone, with 8 equivariant channels. Other hyperparameters follow the EGNN-att setting. The GIN consists of 3 layers. The Noisy Nodes loss~\citep{godwin2022simplenoisy} is applied with hyperparameter $\gamma = 128$.
    \item \textbf{GeoMix-TFN}: GeoMix-TFN uses the same TFN backbone as TFN-att. The GIN also consists of 3 layers. The Noisy Nodes loss is applied with $\gamma = 128$, as in GeoMix-EGNN.
\end{itemize}

\subsection{Training Settings}
\label{ssec:training-settings}
All models are trained using the Adam optimizer with a learning rate of \(5 \times 10^{-5}\), weight decay of \(1 \times 10^{-12}\), and a maximum of 500 training epochs. 
The default batch size is 1024, except for GeoMix models, which use a reduced batch size of 128 due to GPU memory constraints. 
We fix the random seed to 7 for all experiments to ensure reproducibility.

We run our methods mainly on A100 80GB GPUs.

\subsection{OOD Evaluation across Conductivity.}
\label{ssec:conductivity-OOD}

In practical chemical research, it is often necessary to identify substances with high ionic conductivity based on prior knowledge of low-conductivity samples, which naturally poses an out-of-distribution (OOD) generalization challenge. 
However, most existing machine learning models struggle to generalize reliably under such distribution shifts.
Here, we specifically consider an OOD split where low-conductivity samples are used for training and validation, while high-conductivity samples are reserved for testing.

\textbf{Experiment Setup.}
To evaluate our model’s robustness in this setting, we selected samples with conductivity $\leq 10$ mS/cm as the training and validation sets, and those with conductivity $>10$ mS/cm as the test set on the CALiSol dataset. 
The training and validation sets were split randomly in a 4:1 ratio. 
The high-conductivity test set contains 1,244 samples, ensuring that this split is statistically meaningful.
Here, we evaluate only EGNN-linear and TFN-linear, the two best-performing models in \cref{ssec:exp-setup}, as baselines. 
During training, we observed that attention-based aggregation of graph-level information tended to overfit. 
To mitigate this, we applied a dropout rate of $0.1$ to the attention layer and set the attention temperature to $2.0$. 
All other settings remained unchanged.

\textbf{Results.}
The results for this OOD split based on conductivity values are summarized in \cref{tab:conductivity-OOD}. Both baseline models, EGNN-linear and TFN-linear, exhibit limited generalization.
Our proposed models, GeoMix-EGNN and GeoMix-TFN, show significantly lower MSE and higher correlation metrics compared to the baselines, demonstrating their effectiveness in capturing trends in this OOD setting.
These results highlight that our approach improves predictive performance while enhancing OOD generalization under conductivity-based distribution shifts.

\begin{table}[t]
\centering
\vspace{-0.1in}
\caption{Results of OOD evaluation across conductivity on CALiSol dataset. Bold values indicate the best performance, while underlined values indicate the second-best.}
\label{tab:conductivity-OOD}
\adjustbox{width=0.9\textwidth, center, padding = 2pt}{
\begin{tabular}{
    p{0.25\textwidth}<{\raggedright}
    p{0.1\textwidth}<{\centering}
    p{0.1\textwidth}<{\centering}
    p{0.15\textwidth}<{\centering}
    p{0.15\textwidth}<{\centering}
}
\toprule
    Models & MSE ↓ & MAE ↓ & Pearson $r$ ↑ & Spearman $r$ ↑ \\
\midrule
    EGNN-linear~\citep{satorras2021en}    & 32.720& 4.457 & 0.253 & 0.341 \\
    TFN-linear~\citep{thomas2018tensor}   & 24.218& 3.585 & 0.397 & 0.508 \\
\midrule
    GeoMix-EGNN & \textbf{17.132} & \textbf{2.853} & \textbf{0.579} & \underline{0.571} \\ 
    GeoMix-TFN  & \underline{19.427} & \underline{2.925} & \underline{0.436} & \textbf{0.609} \\
\bottomrule
\end{tabular}
}
\vspace{-0.1in}
\end{table}

\subsection{OOD Evaluation across Temperature.}
\label{ssec:temperature-OOD}

Because the free diffusion rate of most materials at ambient temperature is low, it is common to perform molecular dynamics simulations at elevated temperatures and then extrapolate to lower temperatures using the Arrhenius empirical relation ~\citep{hamann2007electrochemistry}.
Therefore, we explored the consistency of model predictions with respect to temperature.

\textbf{Experiment Setup.}
To empirically assess this, we performed an OOD generalization experiment where we trained the model on data with temperature $\leq 320$ K (using a train/valid split of 4:1), and evaluated on samples with temperature $>320$ K. 
All other settings remained unchanged.

\textbf{Results.}
Results in \cref{tab:temperature-OOD} show that GeoMix achieves significantly better predictive performance on this challenging temperature extrapolation task compared to other baseline models, suggesting that the learned representations effectively capture physically meaningful temperature trends without requiring hard-coded constraints.

\begin{table}[h]
\centering
\vspace{-0.1in}
\caption{Results of OOD evaluation across temperature on CALiSol dataset. Bold values indicate the best performance, while underlined values indicate the second-best.}
\label{tab:temperature-OOD}
\adjustbox{width=0.9\textwidth, center, padding = 2pt}{
\begin{tabular}{
    p{0.25\textwidth}<{\raggedright}
    p{0.1\textwidth}<{\centering}
    p{0.1\textwidth}<{\centering}
    p{0.15\textwidth}<{\centering}
    p{0.15\textwidth}<{\centering}
}
\toprule
    Models & MSE ↓ & MAE ↓ & Pearson $r$ ↑ & Spearman $r$ ↑ \\
\midrule
    MLP & 32.432 & 3.857 & 0.432 & 0.556 \\
    MM-MoLFormer~\citep{soares2023capturing} & 22.935 & 2.837 & 0.512 & 0.560 \\
\midrule
    MolSets-Conv~\citep{zhang2023molsets} & 9.930 & 2.196 & 0.839 & 0.845 \\
    MolSets-SAGE~\citep{zhang2023molsets} & 9.193 & 2.135 & 0.784 & 0.802 \\
\midrule
    EGNN-att~\citep{satorras2021en}     & 8.134 & 1.594 & 0.813 & 0.861 \\
    TFN-att~\cite{thomas2018tensor}     & 6.516 & 1.383 & 0.853 & 0.887 \\
    EGNN-linear~\citep{satorras2021en}  & 7.675 & 1.616 & 0.827 & 0.847 \\
    TFN-linear~\citep{thomas2018tensor} & 4.780 & 1.286 & 0.914 & 0.930 \\
\midrule
    GeoMix-EGNN & \underline{2.366} & \textbf{0.883} & \underline{0.950} & \textbf{0.948} \\ 
    GeoMix-TFN  & \textbf{2.354} & \underline{0.917} & \textbf{0.952} & \underline{0.945} \\
\bottomrule
\end{tabular}
}
\vspace{-0.1in}
\end{table}

\subsection{Experiment of coordinate perturbation.}
\label{ssec:coord-perturb}

For complex molecular systems, the initial conformations used in modeling may not always be fully accurate.
To assess the robustness of our model to perturbations in the input geometries, we conducted an additional robustness evaluation.

\textbf{Experiment Setup.}
All other settings remained unchanged, except that Gaussian noise with a standard deviation of $\sigma = 0.3~\text{\AA}$ was added to the atomic coordinates in the test set to simulate conformational uncertainty.

\textbf{Results.}
As shown in \cref{tab:coord-perturb}, most models—including GeoMix-EGNN—exhibit only a marginal performance drop, indicating stability under moderate geometric perturbations.
Notably, models based on the TFN encoder show a much larger performance drop, which may stem from their reliance on higher-degree equivariant features (e.g., spherical harmonics) that could be sensitive to atomic coordinate perturbations.

\begin{table}[h]
\centering
\vspace{-0.1in}
\caption{Results of coordinate perturbation on CALiSol dataset. Bold values indicate the best performance.}
\label{tab:coord-perturb}
\adjustbox{width=0.9\textwidth, center, padding = 2pt}{
\begin{tabular}{
    p{0.25\textwidth}<{\raggedright}
    p{0.1\textwidth}<{\centering}
    p{0.1\textwidth}<{\centering}
    p{0.15\textwidth}<{\centering}
    p{0.15\textwidth}<{\centering}
}
\toprule
    Models & MSE ↓ & MAE ↓ & Pearson $r$ ↑ & Spearman $r$ ↑ \\
\midrule
    MolSets-Conv~\citep{zhang2023molsets}   & 1.326   & 0.703   & 0.960 & 0.965 \\
    MolSetssSAGE~\citep{zhang2023molsets}   & 1.748   & 0.794   & 0.946 & 0.958 \\
\midrule
    EGNN-att~\citep{satorras2021en}	        & 1.919   & 0.587	& 0.942	& 0.965 \\
    TFN-att~\cite{thomas2018tensor}	        & 2.294   & 0.760   & 0.929	& 0.954 \\
    EGNN-linear~\citep{satorras2021en}	    & 1.114   & 0.555	& 0.967	& 0.976 \\
    TFN-linear~\cite{thomas2018tensor}	    & 2.393   & 0.937	& 0.929	& 0.944 \\
\midrule
    GeoMix-EGNN	    & \textbf{0.631}   & \textbf{0.388}	& \textbf{0.982}	& \textbf{0.983} \\
    GeoMix-TFN	    & 4.517   & 1.324	& 0.859	& 0.886 \\
\bottomrule
\end{tabular}
}
\vspace{-0.1in}
\end{table}

\subsection{Runtime Analysis on CALiSol dataset.}
\label{ssec:runtime-analysis}

Results in \cref{tab:runtime-analysis} shows the runtime analysis comparing our methods with other main baselines on the CALiSol dataset. 

\begin{table}[h]
\centering
\vspace{-0.1in}
\caption{Runtime Analysis on CALiSol dataset.}
\label{tab:runtime-analysis}
\adjustbox{width=0.9\textwidth, center, padding = 2pt}{
\begin{tabular}{
    >{\raggedright}m{0.2\textwidth}
    >{\centering}m{0.18\textwidth}
    >{\centering}m{0.15\textwidth}
    >{\centering}m{0.1\textwidth}
    >{\centering\arraybackslash}m{0.13\textwidth}
}
\toprule
    Models & Inference Time ($\times 10^{-2}$ s) & Memory Usage (GB) & MSE ↓ & Pearson $r$ ↑ \\
\midrule
    EGNN-linear~\citep{satorras2021en} & 4.41  & 2.20 & 1.461 & 0.951\\
    TFN-linear~\cite{thomas2018tensor} & 24.13 & 5.91 & 1.107 & 0.967\\
    EGNN-att~\citep{satorras2021en}    & 23.40 & 2.22 & 2.666 & 0.908\\
    TFN-att~\citep{thomas2018tensor}   & 43.20 & 5.91 & 1.808 & 0.946\\
\midrule
    GeoMix-EGNN & 34.04 & 25.15 & 0.552 & 0.985\\ 
    GeoMix-TFN  & 53.64 & 45.18 & 0.432 & 0.987\\
\bottomrule
\end{tabular}
}
\vspace{-0.1in}
\end{table}

\subsection{Evaluation of Different Frame Construction Methods}
\label{ssec:eval-frame-construct}

As described in \cref{ssec:model-arch}, the local frame $\gmF_m$ for each molecule is constructed via eigen-decomposition of the covariance matrix $\Cov(\gmX_m)$. 
However, the axis orientation of $\gmF_m$ obtained from standard PCA is not unique, as simultaneous sign flips of eigenvectors can lead to instability in local frame alignment.
To alleviate this ambiguity, we additionally test a deterministic version of PCA by enumerating all eight possible axis-orientation combinations of the eigenbasis and selecting the one with the largest coordinate sum, i.e.,
\begin{equation}\label{eq:strict-PCA}
    \gmF_m^* = \operatornamewithlimits{argmax}_{\gmF_m' \in \sF(\gmF_m)} \bm{1}^\top\gmF_m'\bm{1},
\end{equation}
where $\sF(\gmF_m)$ denotes the set of all sign permutations of the three principal axes.
This procedure ensures a consistent frame orientation across molecules.

\textbf{Performance Comparison.}
\cref{tab:performance-frame-construction} summarizes the predictive performance of different frame construction methods under the same experimental setup as described in \cref{ssec:exp-setup}.
The results show that the deterministic (strict) PCA achieves performance comparable to the standard PCA, though with slightly higher errors in some cases.
According to \cite{puny2022frame}, the standard PCA-based frame construction, while not strictly equivariant, remains reliable and effective for defining local coordinate systems in molecular modeling.
The reduced flexibility of the strictly equivariant variant may account for its marginally lower performance.
In addition, the methods in \cite{zhang2024improving,zhang2025fast} can also be used as alternatives, which we will study in future work.

\begin{table}[H]
\centering
\vspace{-0.1in}
\caption{Results on the CALiSol dataset. \textit{FC Methods} denotes the frame construction methods. The \textit{strict PCA} refers to the deterministic variant defined in \cref{eq:strict-PCA}.}
\label{tab:performance-frame-construction}
\adjustbox{center, padding = 2pt}{
\begin{tabular}{
    p{0.2\textwidth}<{\raggedright}
    p{0.15\textwidth}<{\centering}
    p{0.15\textwidth}<{\centering}
    p{0.15\textwidth}<{\centering}
}
\toprule
    Models & FC Methods & MSE ↓ & Pearson $r$ ↑ \\
\midrule
    GeoMix-EGNN     & PCA       & 0.552 & 0.985 \\
    GeoMix-TFN      & PCA       & 0.432 & 0.987 \\
\midrule
    GeoMix-EGNN     & strict PCA   & 0.628 & 0.981 \\ 
    GeoMix-TFN      & strict PCA   & 0.460 & 0.986 \\
\bottomrule
\end{tabular}
}
\vspace{-0.1in}
\end{table}

\end{document}